\newif\ifworkinprogress
\newcommand{\RM}[1]{\textcolor{blue}{\textbf{[RM] #1}}}
\newcommand{\HW}[1]{\textcolor{green}{\textbf{[HW] #1}}} 		  
\newcommand{\RM}[1]{}
\newcommand{\HW}[1]{} 	
\newcommand{\R}{\mathbb{R}}
\newcommand{\lfl}{\lfloor}
\newcommand{\rfl}{\rfloor}
\newcommand{\lt}{\left}
\newcommand{\rt}{\right}
\newcommand{\td}{\tilde}
\newcommand{\wtd}{\widetilde}
\newcommand{\cI}{\mathcal{I}}
\newcommand{\cJ}{\mathcal{J}}
\newcommand{\cT}{\mathcal{T}}
\newcommand{\cX}{\mathcal{X}}
\newcommand{\cY}{\mathcal{Y}}
\newcommand{\AL}{\texttt{AL}}
\newcommand{\cmark}{\ding{51}}
\newcommand{\xmark}{\ding{55}}
\newcommand{\lsub}{L}
\newcommand{\rsub}{R}
\newcommand{\rootsub}{O}
\newcommand{\llsub}{LL}
\newcommand{\lrsub}{LR}
\newcommand{\rlsub}{RL}
\newcommand{\rrsub}{RR}
\newcommand{\alg}{\texttt{Quant-BnB}}
\theoremstyle{plain}
\newtheorem{theorem}{Theorem}[section]
\newtheorem{proposition}[theorem]{Proposition}
\newtheorem{lemma}[theorem]{Lemma}
\theoremstyle{definition}
\theoremstyle{remark}
\begin{document}
\title{Quant-BnB: A Scalable Branch-and-Bound Method for Optimal Decision Trees with Continuous Features
\author{
	Rahul Mazumder\thanks{MIT Sloan School of Management, Operations Research Center and MIT Center for Statistics ({email: rahulmaz@mit.edu}).}
	\and 
	Xiang Meng\thanks{MIT Operations Research Center (email: mengx@mit.edu)}
		\and
		Haoyue Wang\thanks{MIT Operations Research Center (email: haoyuew@mit.edu).}
}
}
\date{}
	\maketitle
	\begin{abstract}
Decision trees are one of the most useful and popular methods in the machine learning toolbox.  In this paper, we consider the problem of learning optimal decision trees, a combinatorial optimization problem that is challenging to solve at scale. A common approach in the literature is to use greedy heuristics, which may not be optimal. Recently there has been significant interest in learning optimal decision trees using various approaches (e.g., based on integer programming, dynamic programming)---to achieve computational scalability, most of these approaches focus on classification tasks with binary features. In this paper, we present a new discrete optimization method based on branch-and-bound (BnB) to obtain optimal decision trees. Different from existing customized approaches, we consider both regression and classification tasks with continuous features. The basic idea underlying our approach is to split the search space based on the quantiles of the feature distribution---leading to upper and lower bounds for the underlying optimization problem along the BnB iterations. Our proposed algorithm \texttt{Quant-BnB} shows significant speedups compared to existing approaches for shallow optimal trees on various real datasets.
\end{abstract}

\section{Introduction}\label{sect:intro}
A decision tree is a classic machine learning predictive tool with a flowchart-like structure that allows users to derive interpretable decisions.  
Combined with its effectiveness in solving classification and regression tasks, it is an immensely useful tool in 
domains where interpretability is of great importance. 
{Despite its appeal, the task of learning an optimal decision tree (with smallest training error)} is $\mathcal{NP}$-hard~\cite{laurent1976constructing} and thus computationally challenging. Therefore, greedy methods such as CART \cite{breiman1984classification} and ID3 \cite{quinlan1986induction} are popular choices. They construct decision trees through a top-down approach. Starting from the root node, data is iteratively split into subsets according to local objectives. In spite of high efficiency, greedy heuristics may not lead to an optimal solution, possibly resulting in suboptimal predictive performance~\cite{bertsimas2017optimal}.

In recent years, 
there have been major advances in exploring optimization methods to construct {\emph{optimal}}\footnote{In this paper,  {\emph{optimal}} refers to a global optimal solution to the optimization problem associated with learning a decision tree.} decision trees. \cite{verwer2019learning,gunluk2021optimal} explore mixed integer programming (MIP) approaches to learn 
optimal trees with a fixed depth. \cite{aglin2020learning,JMLR:v23:20-520} propose interesting dynamic programming (DP) approaches for optimal decision trees. 
Despite impressive methodological advances, optimal tree-learning approaches face the following challenges:~(\textbf{i})~MIP formulations appear to have limited scalability. \cite{verwer2019learning} report that a MIP solver cannot solve an optimal tree of depth $2$ with less than 1000 observations and 10 features in 10 minutes.~(\textbf{ii})~Most state-of-the-art algorithms \cite{JMLR:v23:20-520,aghaei2021strong,mctavish2022fast} consider datasets with binary features (i.e, every feature is $\{0,1\}$) rather than continuous ones.
Algorithms for optimal trees with continuous features are much less developed\footnote{While it is possible to convert continuous features to binary features, this may result in a large number of features, which leads to large computation times---see our experiments for details. To achieve scalability, it may be more beneficial to have tailored approaches for continuous features.}---our goal in this paper is to bridge this gap in the literature.

In this work we take a step towards addressing the aforementioned shortcomings by developing a novel branch-and-bound (BnB) algorithm for the computation of shallow optimal trees (e.g. depth=$2,3$). In contrast to earlier approaches, our algorithm can handle both classification and regression tasks and is designed to directly handle continuous features (including a mix of continuous and binary features). In a nutshell, our proposed algorithm \alg~utilizes the quantiles of the feature distribution to decompose the search space into sub-regions---these are subsequently used to generate lower bounds and upper bounds on the optimal value, and prune sub-optimal regions of the search-space. To our knowledge, \alg~ is the first standalone method (i.e, does not rely on proprietary optimization solvers) for optimal classification/regression trees that directly applies to datasets with continuous features. We show that \alg~computes the optimal solution (cf. Section \ref{sect:correctness}), and achieves significant empirical improvements compared to existing methods (cf Section \ref{sect:exp}). 
A Julia implementation of our code is open-sourced on GitHub\footnote{{\small{\url{https://github.com/mengxianglgal/Quant-BnB}}}}.

\begin{table}[h!]
\begin{adjustbox}{width=0.6\columnwidth,center}
\begin{tabular}{lcccc} \\ \Xhline{3\arrayrulewidth}
Paper & \textit{Opt} & \textit{Feat}   & \textit{Task}  & Model\\ \hline
Carreira-Perpin{\'a}n and Tavallali~\cite{carreira2018alternating} & \xmark &\cmark&  \textit{R}/\textit{C} & TAO\\
 Bertsimas and Dunn~\cite{bertsimas2019machine} &  \cmark &\cmark&  \textit{R}/\textit{C} & MIP\\
Verwer and Zhang~\cite{verwer2019learning} &  \cmark &\cmark&  \textit{C} & MIP\\
Aglin et al.~\cite{aglin2020learning} & \cmark& \xmark & \textit{C} & DP\&BnB\\
Demirovi{\'c} et al.~\cite{JMLR:v23:20-520} & \cmark& \xmark & \textit{C}& DP\\
Lin et al.~\cite{lin2020generalized} & \cmark &\xmark & \textit{C}& DP\&BnB\\
Aghaei et al.~\cite{aghaei2021strong} & \cmark &\xmark &\textit{R}/\textit{C}& MIP\\
Our approach & \cmark & \cmark &\textit{R}/\textit{C} & BnB\\  \Xhline{3\arrayrulewidth}
\end{tabular}
\end{adjustbox}
\caption{\small{Related works on decision tree optimization. \textit{Opt} indicates if the approach finds an optimal tree. \textit{Feat}  indicates whether the method works for continuous features (without binarizing features). \textit{Task} indicates what tasks the method can handle: \textit{R} for regression and \textit{C} for classification. TAO is an alternating optimization-based heuristic. The optimal methods are based on BnB, MIP (optimization solvers) and DP.
}}
\label{tab:relatework}
\end{table}
\textbf{Related Work:}
Various optimization techniques have been explored to learn high-quality decision trees~\cite{bennett1996optimal,dobkin1997induction,nijssen2007mining,farhangfar2008fast,nijssen2010optimal,carreira2018alternating}. A number of recent works explore MIP-approaches for optimal decision trees. For example, \cite{bertsimas2019machine} formulate learning optimal trees with a fixed depth as a MIP model. \cite{gunluk2021optimal} design an improved model with much fewer binary decision variables for classification problems with binary features. \cite{verwer2019learning} propose a model that works for numerical features with the same order of binary variables as in \cite{gunluk2021optimal}. \cite{NEURIPS2020_1373b284} present a MIP approach for optimal decision trees with hyperplane splits (aka oblique trees). 
Other MIP-based approaches have been proposed in~\cite{aghaei2019learning,aghaei2021strong}.
In addition to the MIP approach, SAT solvers have been explored to learn optimal decision trees~\cite{bessiere2009minimising,narodytska2018learning,hu2020learning}.

Another line of work explores pruning techniques to improve the efficiency of DP-based approaches.
\cite{angelino2017learning,chen2018optimization,hu2019optimal} solve the optimal sparse decision tree using analytical bounds on the optimal solution together with a customized bit-vector library. \cite{lin2020generalized} improve the efficiency of earlier approaches by using DP methods. \cite{aglin2020learning} utilize DP to compute better dual bounds during BnB search. \cite{JMLR:v23:20-520} show useful computational gains by using pre-computed information from sub-trees and hash functions. 
\cite{mctavish2022fast} design smart guessing strategies to improve the performance of BnB-based approaches.

As mentioned earlier, current approaches are unable to compute optimal classification/regression trees with continuous features at scale---a problem we address in this paper. Table~\ref{tab:relatework} presents a summary of the key characteristics of related existing approaches vis-a-vis our proposed method \alg.

\section{Preliminaries and Notations}
\subsection{Overview of optimal decision trees}
Consider a supervised learning problem with $n$ observations $\{(x_i,y_i)\}_{i\in [n]}$, each with $p$ features $x_i\in \cX \subseteq \R^p$ and response $y_i\in \cY$.
A decision tree recursively partitions the feature space $\cX$ into a number of hierarchical, disjoint regions, and makes a prediction for each region. In this paper, we focus on binary decision trees (i.e., every non-terminal node splits into left and right children) with axis-aligned splits. See \cite{breiman1984classification} for further details.

For a decision tree $T$ and feature vector $x$, let $T(x) \in  \cY$ denote the corresponding prediction. Given a loss function $\ell(\cdot, \cdot)$ on $\cY \times \cY$, 
and a family $\cT$ of decision trees, an \textit{optimal decision tree} is a global optimal solution to the following optimization problem:
\begin{equation}\label{opt-tree1}
\min\nolimits_{T\in \cT} ~~ \sum\nolimits_{i=1}^n \ell(y_i, T(x_i)).
\end{equation}
For regression problems with a scalar output, we can take $\cY \subset \R$ and 
$\ell$ as the squared loss: $\ell_{SE}(y, \hat y) := (y - \hat y)^2$. We also consider extensions to multivariate continuous outcomes with $y \in \cY \subset \R^m,\,m \geq 1$ and $\ell_{SE}(y, \hat y) := \| y - \hat y\|^2$.
For classification problems with $C$ classes, 
one can take $\cY = [C] := \{1,2,...,C\}$, and
$\ell$ to be the 0-1 (or mis-classification) loss i.e., $\ell_{01}(y, \hat y)=\mathbf{1}(y \neq \hat{y})$.

\subsection{Data types for the feature space}
In this paper, we consider the general case where $x_i$ contains continuous and possibly binary features. If a feature $f$ is continuous, then $\{x_{i,f}\}_{i=1}^n$ can contain at most $n$ different values. Our approach also applies to the setting where the number of distinct values of $f$ is much smaller than $n$. Recall that if $f$ is a binary feature then $x_{i,f} \in \{0,1\}, i \in [n]$.
As mentioned earlier, when the features are all binary, 
the optimal decision tree can be computed quite efficiently as shown in recent works (cf Section \ref{sect:intro}). 
To gather some intuition, note that, if all features are binary, computing an optimal regression tree of depth $d$ costs $O(n p^d)$ operations by exhaustive search---this can be acceptable even when $n$ is large (e.g. $n \approx 10^4 \sim 10^5$) but $d$ is small (e.g. $d =2,3$). 
In contrast, if all features are drawn from a continuous distribution, then an exhaustive search would cost $O(n^d p^d)$ (almost surely)---this may be computationally intractable for the same values of $n,d$.
In this paper, our focus is to propose an efficient BnB framework for the challenging case when the features are continuous---a 
topic that seems to be less studied when compared to the case where all features are binary.

\subsection{Notations}

For a feature $f\in [p]$, let 
$u(f) \in [n]$ be the number of distinct elements in $\{x_{i,f}\}_{i=1}^n$---i.e., the number of unique values assumed by feature $f$ in the training data. We let $w^f_1 < w^f_2 <\dots < w^f_{u(f)}$ denote these distinct values. In addition, we set $w^f_0=-\infty$ and $w^f_{u(f)+1} = +\infty$. 
For any integer $t$  with $0 \leq t \leq u(f)$, let $\td w^f_t := (w^f_t + w^f_{t+1})/2$. 
For Problem~\eqref{opt-tree1}, it suffices to consider candidate trees with all splitting thresholds located in the set $ \{ \td w^f_{t} \}_{f\in [p], 0\le t \le u(f) }$
(Note that  different splitting thresholds in the interval $(w^f_{t}, w^f_{t+1})$ give the same routing decision on the training set, so we choose the mid-point $\td w^f_{t}$ as a candidate threshold). As we consider axis-aligned splits, 
each splitting node of a tree
can be described by a tuple $ (f, t)$, 
where $f\in [p]$ is the splitting feature, and $ 0 \le t\le u(f)$ is an integer indicating that the splitting threshold is $\td w_t$. We say $(f,t)$ is the \textit{splitting rule} of this node. 
For a set $\cI \subseteq [n]$, a feature $f\in[p]$ and two integers $a,b$ with $0\le a\le b\le u(f)$, define
$$\cI^f_{[a,b]}~:=~\{i\in \cI~|~ \td w^f_a \le x_{i,f}\le \td w^f_b \}$$
to be the set of sample indices $i$ for which $x_{i,f}$ lies between $\td w^f_a$ and $\td w^f_b$. 

For $A, B >0$, we use the notations $A = O(B)$ or $A\lesssim B$ to denote that there exists a universal constant $C$ such that $A \le C B$; and use the notation $A = \wtd O(B)$ if $A = O(B\log(n))$, where $n$ is the number of samples.

\subsection{Preliminaries for decision tree with depth 2}
The algorithms we present in this paper are capable of solving shallow (e.g., $d=2$ or $3$) optimal trees within practical runtimes. 
We present an in-depth description of our proposed 
approach~\alg~for the case $d=2$. Section~\ref{sect:extension} presents a sketch of how it can be extended to deeper trees.

An optimal tree of depth 2 is a solution to the following problem
\begin{equation}\label{eq:opt}
\min_{T\in \cT_2} ~~ \sum_{i=1}^n \ell(y_i, T(x_i)),
\end{equation}
where, $\cT_2$ is the set of all decision trees with depth $2$ whose splitting thresholds are in $ \{ \td w^f_{t} \}_{f\in [p], 0\le t \le u(f) }$.

For a tree $T\in\cT_2$, let $(f_{\rootsub}(T),t_{\rootsub}(T))$, $(f_{\lsub}(T),t_{\lsub}(T))$ and $(f_{\rsub}(T),t_{\rsub}(T))$ denote the splitting rules at the root node ($\rootsub$), the left child ($\lsub$) and right child ($\rsub$) of the root node respectively. 
Given $f_0,f_1,f_2\in[p] $ and two integers $a,b$ with $0\le a\le b\le u(f_0)$, 
we define
 { \begin{equation*}\label{eq:defT}
 \small
     \begin{aligned}
 \cT_2(f_0,[a,b],f_1,f_2):=
 \Big\{ 
 T\in \cT_2  \Big| 
 f_{\rootsub}(T) = f_0, \, t_{\rootsub}(T)\in[a, b], f_{\lsub}(T) = f_1, \ f_{\rsub}(T)= f_2
 \Big\} 
     \end{aligned}
 \end{equation*}}
 to be the set of trees in $\cT_{2}$ whose root node, left child and right child splitting features are $f_0, f_1$ and $f_2$ respectively, and the splitting threshold of the root node is between $\td w_a$ and $\td w_b$. 
Given $F_1,F_2\subseteq [p]$, define 
\begin{equation}
\cT_2(f_0,[a,b],F_1,F_2) ~:=~ \bigcup_{f_1\in F_1,f_2\in F_2} \cT_2(f_0,[a,b],f_1,f_2). \nonumber
\end{equation} 
We adopt the convention that $\cT_2(f_0,[a,b],F_1,F_2) = \emptyset$ if $F_1$ or $F_2$ is empty. 
Note that here $a$ and $b$ are integers indicating that the splitting threshold of $f_0$ lies in $\{\td w_{a}^{f_0}, \td w_{a+1}^{f_0},, ...., \td w_{b}^{f_0}\}$.
See the appendix for an illustrative example. 

For a given subset of samples $\cI \subseteq  [n]$, define
\begin{equation}
    L_0(\cI) ~:=~ \min\nolimits_{y\in \cY}~ \sum\nolimits_{i\in \cI} \ell( y_i , y)
\end{equation}
to be the loss of the best constant fit to $\{y_{i}\}_{i\in\cI}$.
As concrete examples, 
for regression problem with $\cY = \R^m$ and $\ell = \ell_{SE}$, it holds
\begin{equation}\label{L0-reg}
L_0(\cI) \ = \
\sum\nolimits_{i\in \cI} \big\| 
y_i - ({1}/{|\cI|}) \sum\nolimits_{j\in \cI} y_j
\big\|^2. 
\end{equation}
For classification problem with $\cY = [C]$ and loss $\ell = \ell_{01}$, it holds
\begin{equation}\label{L0-cls}
L_0(\cI) \ = \
\min\nolimits_{c\in [C]} \sum\nolimits_{i\in \cI} \textbf{1}_{\{y_i \neq c\}}.
\end{equation}

Note that $L_0(\cI)$ can be viewed as
the minimum loss of a ``depth-0" decision tree with observations in $\cI$. 
Similarly, 
we define a function $L_1$ to be 
the best possible objective value for a depth-1 decision tree (a ``stump") when using observations in $\cI$ and a feature $f\in [p]$:
\begin{equation}\label{eq:defL1}
L_1 (\cI, f) ~:=~ \min_{0 \le t \le u(f)}  \big\{  L_0(\cI^f_{[0,t]}) +  L_0(\cI^f_{[t,u(f)]}) \big\}.
\end{equation}
Note that for a given feature $f$ and a set of indices $\cI$, if $L_0$ is given by \eqref{L0-reg} or \eqref{L0-cls}, then $L_1 (\cI, f) $ can be computed within $O(|\cI| \log |\cI|) \le \wtd O(|\cI|)$ operations (the log-factor is from a sorting operation). 

Let us consider $f_0, f_1, f_2 \in [p]$, integers $a,b$ with $0 \le a \le b \le u(f_0)$ and $\cI \subseteq [n]$. 
Extending the definitions of $L_0$ and $L_1$ above, we now consider the best objective for a depth-2 decision tree. Define 
\begin{equation}\label{def:L2}
 L_2(\cI, f_0, [a,b], f_1, f_2)  
:= \min_{a \le t \le b} \big\{  L_1(\cI^{f_0}_{[0,t]},f_1) +  L_1(\cI^{f_0}_{[t,u({f_0})]},f_2) \big\}
\end{equation}
to be the minimum value of $\sum_{i\in \cI} \ell(y_i,T(x_i))$ over all depth-2 trees $T \in \cT_2(f_0, [a,b], f_1, f_2)$. 

To simplify the notation above, define the parameter space
\begin{equation}\label{def:Phi}
\begin{aligned}
\Phi := &\Big\{ (f_0, [a,b], f_1, f_2)~|~ f_0, f_1,f_2 \in [p], \\
&~~~~ a,b \text{ are integers with } 0\le a \le b \le u(f_0) \Big\}. 
\end{aligned}
\end{equation}
Then for any $\phi = (f_0, [a,b], f_1, f_2) \in \Phi$, and $\cI\subseteq [n]$, we use the 
short-hand notations 
\begin{eqnarray}
\cT_2(\phi) &=& \cT_2(f_0, [a,b], f_1, f_2), \nonumber\\
L_2(\cI, \phi) &=& L_2(\cI, f_0, [a,b], f_1, f_2). \nonumber 
\end{eqnarray}

\section{Upper and Lower Bounds for $L_2(\cI,\phi)$}

For any $\phi \in \Phi$ and $\cI \subseteq [n]$, recall that $L_2(\cI,\phi)$ is the best objective value of \eqref{eq:opt} across trees in $\cT_2(\phi)$ with samples in $\cI$. In this section, we discuss upper and lower bounds for $L_2(\cI,\phi)$---the costs for computing these bounds are lower than directly computing $L_2(\cI,\phi)$. \texttt{Quant-BnB} critically makes use of these upper and lower bounds while performing BnB (cf Section \ref{sect:alg}).

\subsection{Upper bounds for $L_2(\cI,\phi)$}

Given $\phi = (f_0, [a,b], f_1,f_2) \in \Phi$ and $\cI \subseteq [n]$, we compute an upper bound of $L_2(\cI, \phi)$ by making use of the quantiles of the features. 
For an integer $s$ with $1\le s \le b-a$, we say that a set of integers $(t_0, t_1, \dots, t_s)$ are \textit{almost s-equi-spaced} in the interval $[a,b]$ if $t_0 = a$, $t_s = b$ and $t_j = \lfloor a + ({j}/{s}) (b-a)\rfloor$ for $1 \le j\le s-1$. Given such a sequence $(t_0, t_1, \dots, t_s)$, we define
\begin{equation}
    V_s (\cI, \phi) := \min_{0\le j \le s} \Big\{ L_1(\cI_{[0,t_j]}^{f_0}, f_1 )  +  L_1 ( \cI_{[t_j,u(f_0)]}^{f_0}, f_2 ) \Big\}.
    \nonumber
\end{equation}
It follows that $V_s (\cI, \phi) \ge L_2(\cI, \phi)$ for all $\phi = (f_0, [a,b], f_1,f_2) \in \Phi$ satisfying $b-a \ge s$; and hence $V_{s}$ is an upper bound to $L_2$. We note that quantile-based methods are commonly used as a heuristic in decision tree algorithms (e.g., XGBoost \cite{chen2016xgboost}). Our work differs---as discussed below, we make use of this quantile-based approach to obtain an optimal decision tree.

\subsection{Lower bounds for $L_2(\cI, \phi)$}\label{sect:lower-bounds}
We present some lower bounds for $L_2(\cI, \phi)$. The lower bounds along with the upper bounds discussed earlier, form the backbone of our BnB procedure. 

\noindent \textbf{Lower bound~1:}
The first lower bound we consider is obtained by sorting the values of a feature, and dropping the values lying in an interior sub-interval. 
Given any $\phi = (f_0, [a,b], f_1,f_2) \in \Phi$ and $\cI \subseteq [n]$, 
define 
\begin{equation}\label{def:W0}
    W_0(\cI, \phi) = W_0(\cI, f_0, [a,b], f_1, f_2) 
 := L_1( \cI_{[0,a]}^{f_0} , f_1 ) +
    L_1( \cI_{[b,u(f_0)]}^{f_0} , f_2 ).
\end{equation}
We can interpret $W_0(\cI, \phi)$ as follows: the samples in $\cI_{[0,a]}^{f_0} $ are routed to the left subtree yielding a loss $L_1( \cI_{[0,a]}^{f_0} , f_1 )$; the samples in $\cI_{[b,u(f_0)]}^{f_0}$ are routed to the right subtree with a loss $L_1( \cI_{[b,u(f_0)]}^{f_0} , f_2 )$; and the samples in $ \cI_{[a,b]}^{f_0} $ are ``dropped" (i.e., do not enter any leaf node). Lemma \ref{lemma:key-ineq} shows that $W_0$ is a lower bound for $L_{2}$ i.e., $W_0(\cI, \phi) \le L_2(\cI, \phi)$ for all $\phi \in \Phi$.
Furthermore, $W_0(\cI,\phi)$ is easier to compute 
than $L_2(\cI, \phi)$. 
Figure~\ref{fig:engage} presents a schematic diagram illustrating computation of the lower bound $W_0$. 

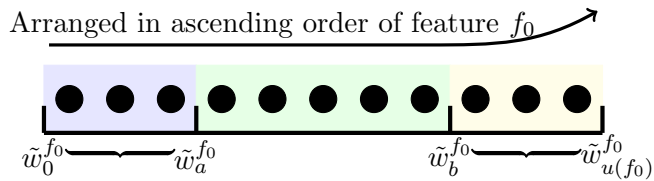
\begin{figure}[h]
\centering
\begin{tikzpicture}[scale=0.9]
   \draw [fill=blue!10,blue!10] (-0.375,-0.5) rectangle (1.875,0.5);
   \draw [fill=green!10,green!10] (1.875,-0.5) rectangle (5.625,0.5);
   \draw [fill=yellow!10,yellow!10] (5.625,-0.5) rectangle (7.875,0.5);
   
   \draw [fill] (0,0) circle [radius=0.2];
   \draw [fill] (0.75,0) circle [radius=0.2];
   \draw [fill] (1.5,0) circle [radius=0.2];
   \draw [fill] (2.25,0) circle [radius=0.2];
   \draw [fill] (3,0) circle [radius=0.2];
   \draw [fill] (3.75,0) circle [radius=0.2];
   \draw [fill] (4.5,0) circle [radius=0.2];
   \draw [fill] (5.25,0) circle [radius=0.2];
   \draw [fill] (6,0) circle [radius=0.2];
   \draw [fill] (6.75,0) circle [radius=0.2];
   \draw [fill] (7.5,0) circle [radius=0.2];
   \draw [ultra thick] (-0.375,-0.5) -- (7.875,-0.5);
   \draw [ultra thick] (-0.375,-0.5) -- (-0.375,-0.1);
   \draw [ultra thick] (7.875,-0.5) -- (7.875,-0.1);
   \draw [ultra thick] (1.875,-0.5) -- (1.875,-0.1);
   \draw [ultra thick] (5.625,-0.5) -- (5.625,-0.1);
   
   \draw[->,very thick] (-0.3,0.8) to (5.3,0.8) to [out=0,in=210] (7.8,1.3);
   \node at (3,1.1) {Arranged in ascending order of feature $f_0$};
   \node at (-0.375,-0.85) {$\td w^{f_0}_0$};
   \node at (1.875,-0.85) {$\td w^{f_0}_{a}$};
   \node at (5.625,-0.85) {$\td w^{f_0}_{b}$};
   \node at (8.1,-0.85) {$\td w^{f_0}_{u(f_0)}$};
   \draw [decorate,ultra thick,
    decoration = {calligraphic brace}] (1.55,-0.75) --  (-0.05,-0.75);
    \draw [decorate,ultra thick,
    decoration = {calligraphic brace}] (7.55,-0.75) --  (5.95,-0.75);
   \end{tikzpicture} 
\caption{Figure showing sorting of the unique values in feature $f_0$ into intervals $[\td w^{f_0}_0, \td w^{f_0}_{a}]$, $[\td w^{f_0}_{a},\td w^{f_0}_{b}]$ and $[\td w^{f_0}_{b},\td w^{f_0}_{u(f_0)}]$. In the definition of $W_0$, samples with $x_{i,f_0}$ between $\td w^{f_0}_a$ and $\td w^{f_0}_b$ (green part in the figure) are dropped. 
}
\label{fig:engage}
\end{figure}

\noindent \textbf{Lower bound~2:} The second lower bound we consider is obtained by using a subset of samples in the middle to fit another depth-2 tree. Given any $\phi = (f_0, [a,b], f_1,f_2) \in \Phi$ and $\cI\subseteq [n]$, define
\begin{equation}\label{def:W2}
 W_2(\cI, \phi) = W_2(\cI, f_0, [a,b], f_1, f_2) 
:= 
    W_0(\cI, \phi) + L_2( \cI^{f_0}_{[a,b]}, \phi ).
\end{equation}
Note that in the definition of $W_2$, the samples in $\cI_{[a,b]}^{f_0}$ are used to fit another depth-2 tree in $\cT_2(\phi) $, and yield an additional loss term $L_2( \cI^{f_0}_{[a,b]}, \phi )$. It can be proved that $W_2(\cI,\phi) \le L_2(\cI, \phi)$ for all $\phi \in \Phi$ (see Lemma \ref{lemma:key-ineq}). 
Note that $W_2(\cI,\phi)$ is a better lower bound than $W_0(\cI,\phi)$, but has a higher computational cost---see Section \ref{sect:correctness} for details. 

\noindent \textbf{Lower bound~3:} The third lower bound we introduce below combines the above ideas underlying the computation of $W_0$ and $W_2$. 
To introduce this lower bound, we need some additional notations. Given a $\phi = (f_0, [a,b], f_1,f_2) \in \Phi$; and almost $s'$-equi-spaced integers $(t_0, t_1, \dots, t_s')$ in $[a,b]$ (for an integer $s' \le b-a$), define
\begin{equation}\label{eq:defhatL2}
    \widehat L_2(\cI, \phi, s') = 
    \widehat L_2 (\cI, f_0, [a,b], f_1, f_2, s') 
    := \min_{1 \le j \le s'} \big\{  L_1(\cI^{f_0}_{[0,t_{j-1}]},f_1) 
     +  L_1(\cI^{f_0}_{[t_{j},u({f_0})]},f_2) \big\}. 
\end{equation}
 Note that in the $j$-th term of the minimum in~\eqref{eq:defhatL2}, we drop the 
observations in $\cI^{f_0}_{[t_{j-1}, t_j]}$. 
The following lemma shows that $\widehat L_2 (\cdot, \cdot, s')$ is a lower bound of $L_2(\cdot, \cdot)$. 

\begin{lemma}\label{lemma:L2-lb}
For any $\cI\subseteq [n]$, any $\phi = (f_0, [a,b], f_1, f_2) \in \Phi$, and any $s\le b-a$, it holds 
 \begin{equation}
 \widehat L_2 (\cI, \phi, s') 
 \le 
 L_2(\cI, \phi).
 \end{equation}
\end{lemma}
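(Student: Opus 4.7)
The plan is to prove the inequality by taking an optimal splitting threshold $t^\star \in [a,b]$ for $L_2(\cI,\phi)$, locating it inside one of the sub-intervals $[t_{j-1}, t_j]$ of the almost $s'$-equi-spaced partition of $[a,b]$, and then showing that the corresponding term in the minimum defining $\widehat L_2(\cI, \phi, s')$ is at most $L_2(\cI,\phi)$. The key observation is a monotonicity property of $L_1$: if $\cJ' \subseteq \cJ$, then $L_1(\cJ', f) \le L_1(\cJ, f)$ for every feature $f$. This is because $L_0$ itself is monotone in the sample set (the loss $\sum_{i \in \cJ'} \ell(y_i, y)$ is bounded above by $\sum_{i \in \cJ} \ell(y_i, y)$ since the loss is non-negative), and $L_1$ inherits this monotonicity as a minimum of a sum of $L_0$ terms over two-way partitions.

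First I would unpack the definitions: by \eqref{def:L2} there exists $t^\star \in [a,b]$ such that
\begin{equation*}
L_2(\cI, \phi) = L_1(\cI^{f_0}_{[0, t^\star]}, f_1) + L_1(\cI^{f_0}_{[t^\star, u(f_0)]}, f_2).
\end{equation*}
Since $t_0 = a \le t^\star \le b = t_{s'}$, there exists an index $j \in \{1, \dots, s'\}$ with $t_{j-1} \le t^\star \le t_j$. From $t_{j-1} \le t^\star$ I get the inclusion $\cI^{f_0}_{[0, t_{j-1}]} \subseteq \cI^{f_0}_{[0, t^\star]}$, and from $t^\star \le t_j$ I get $\cI^{f_0}_{[t_j, u(f_0)]} \subseteq \cI^{f_0}_{[t^\star, u(f_0)]}$.

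Next, applying the monotonicity of $L_1$ to both inclusions and summing gives
\begin{equation*}
L_1(\cI^{f_0}_{[0, t_{j-1}]}, f_1) + L_1(\cI^{f_0}_{[t_j, u(f_0)]}, f_2) \;\le\; L_1(\cI^{f_0}_{[0, t^\star]}, f_1) + L_1(\cI^{f_0}_{[t^\star, u(f_0)]}, f_2) \;=\; L_2(\cI,\phi).
\end{equation*}
The left-hand side is one particular term in the minimum defining $\widehat L_2(\cI, \phi, s')$ in \eqref{eq:defhatL2}, so $\widehat L_2(\cI, \phi, s') \le L_2(\cI, \phi)$, as desired.

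The only step that requires care is the monotonicity of $L_1$; I would establish it as a short preliminary claim by first noting that $\cJ' \subseteq \cJ$ implies $L_0(\cJ') \le L_0(\cJ)$ (evaluate the right-hand minimand at its optimizer and drop the non-negative terms indexed by $\cJ \setminus \cJ'$), and then observing that the split $t$ achieving the minimum in the definition \eqref{eq:defL1} of $L_1(\cJ, f)$ can be reused for $\cJ'$, giving a feasible upper bound on $L_1(\cJ', f)$ that is term-by-term no larger than $L_1(\cJ, f)$. Aside from this monotonicity lemma the argument is purely a choice-of-index argument, so I do not anticipate any real obstacle.
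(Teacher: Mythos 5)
Your proof is correct and follows essentially the same route as the paper's: both locate the optimal threshold $t^\star$ in a sub-interval $[t_{j-1},t_j]$ of the equi-spaced partition and then compare the two $L_1$ terms after shrinking the sample sets. The only cosmetic difference is that the paper obtains the needed inequality $L_1(\cJ',f)\le L_1(\cJ,f)$ for $\cJ'\subseteq\cJ$ as a consequence of the super-additivity of $L_1$ over disjoint unions (its Lemma on $L_1$) together with non-negativity, whereas you prove this monotonicity directly by reusing the optimal split of $\cJ$ as a feasible split for $\cJ'$; both justifications are valid.
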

The proof of Lemma \ref{lemma:L2-lb} is presented in the appendix. Note that the computation of $\widehat L_2 (\cI, \phi, s') $ is easier than the computation of $L_2(\cI, \phi)$.

Using $\widehat L_2$, we can introduce the third lower bound for $L_2(\cI, \phi)$. 
Given any $\phi = (f_0, [a,b], f_1,f_2) \in \Phi$ and any integer $s' \le b-a$, 
define 
\begin{equation}\label{def:W1}
 W_{1,s'}(\cI,\phi) = 
W_{1,s'}(\cI,f_0, [a,b], f_1, f_2) 
:= 
    W_0(\cI,\phi) + \widehat L_2( \cI^{f_0}_{[a,b]}, \phi, s' ).
\end{equation}
Note that in the definition above, the samples in $\cI^{f_0}_{[a,b]}$ are used to compute a term $\widehat L_2( \cI^{f_0}_{[a,b]}, \phi, s' )$. As a result, the value $W_{1,s'}(\cI,\phi)$ is larger than $W_0(\cI,\phi)$ and smaller than $ W_2(\cI,\phi)$ (due to Lemma \ref{lemma:L2-lb}). 

The following lemma justifies that $W_0$, $W_{1,s'}$ and $W_2$ are indeed lower bounds of $L_2$. 
\begin{lemma}\label{lemma:key-ineq}
For any $\cI \subseteq [n]$, $\phi \in \Phi$ and $s'\le b-a$, it holds
\begin{equation}
W_0(\cI,\phi) \le W_{1,s'}(\cI,\phi) \le W_2(\cI,\phi) \le 
L_2(\cI, \phi). \nonumber
\end{equation}
\end{lemma}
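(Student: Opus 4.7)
I would prove the three inequalities from left to right, where the first two are immediate from earlier material and the third is the substantive one.

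For the first inequality, $W_0(\cI,\phi) \le W_{1,s'}(\cI,\phi)$, I would simply observe that $\widehat L_2(\cI^{f_0}_{[a,b]}, \phi, s')$ is a minimum of sums of $L_1$-values, each of which is a nonnegative loss. Hence $\widehat L_2(\cI^{f_0}_{[a,b]}, \phi, s') \ge 0$ and the definition \eqref{def:W1} yields the claim. For the second inequality, $W_{1,s'}(\cI,\phi) \le W_2(\cI,\phi)$, I would apply Lemma \ref{lemma:L2-lb} with index set $\cI^{f_0}_{[a,b]}$ in place of $\cI$ to obtain $\widehat L_2(\cI^{f_0}_{[a,b]}, \phi, s') \le L_2(\cI^{f_0}_{[a,b]}, \phi)$, then add $W_0(\cI,\phi)$ to both sides and invoke the definitions \eqref{def:W1} and \eqref{def:W2}.

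The main step is the third inequality, $W_2(\cI,\phi) \le L_2(\cI,\phi)$. The key auxiliary fact I would establish first is superadditivity of $L_0$ and consequently of $L_1$: if $\cA, \cB$ are disjoint subsets of $[n]$, then $L_0(\cA \sqcup \cB) \ge L_0(\cA) + L_0(\cB)$, because the best constant fit on the union has to agree across the two pieces while the best constant fits on $\cA$ and $\cB$ separately can differ. Applying this pointwise for every threshold $t$ and using $\min_t (g(t)+h(t)) \ge \min_t g(t) + \min_t h(t)$ yields $L_1(\cA \sqcup \cB, f) \ge L_1(\cA, f) + L_1(\cB, f)$ for any feature $f$.

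Now let $t^\star \in [a,b]$ attain the minimum in the definition \eqref{def:L2} of $L_2(\cI,\phi)$. Since the thresholds $\td w^{f_0}_a$ and $\td w^{f_0}_b$ are midpoints of consecutive distinct feature values, no sample has $x_{i,f_0}$ equal to either, so the splits are genuinely disjoint:
\begin{equation*}
\cI^{f_0}_{[0,t^\star]} = \cI^{f_0}_{[0,a]} \sqcup \cI^{f_0}_{[a,t^\star]}, \qquad \cI^{f_0}_{[t^\star,u(f_0)]} = \cI^{f_0}_{[t^\star,b]} \sqcup \cI^{f_0}_{[b,u(f_0)]}.
\end{equation*}
Applying the superadditivity of $L_1$ to each piece and summing gives
\begin{equation*}
L_2(\cI,\phi) \ge \bigl[ L_1(\cI^{f_0}_{[0,a]}, f_1) + L_1(\cI^{f_0}_{[b,u(f_0)]}, f_2) \bigr] + \bigl[ L_1(\cI^{f_0}_{[a,t^\star]}, f_1) + L_1(\cI^{f_0}_{[t^\star,b]}, f_2) \bigr].
\end{equation*}
The first bracket is $W_0(\cI,\phi)$ by \eqref{def:W0}. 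For the second bracket, I would verify the set identities $(\cI^{f_0}_{[a,b]})^{f_0}_{[0,t^\star]} = \cI^{f_0}_{[a,t^\star]}$ and $(\cI^{f_0}_{[a,b]})^{f_0}_{[t^\star,u(f_0)]} = \cI^{f_0}_{[t^\star,b]}$ (which follow since $a \le t^\star \le b$), so the second bracket equals the $t = t^\star$ term in the minimum defining $L_2(\cI^{f_0}_{[a,b]}, \phi)$, and is therefore at least $L_2(\cI^{f_0}_{[a,b]}, \phi)$. Combining and invoking \eqref{def:W2} yields $L_2(\cI,\phi) \ge W_0(\cI,\phi) + L_2(\cI^{f_0}_{[a,b]}, \phi) = W_2(\cI,\phi)$, which completes the chain.

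The only mildly subtle point is the disjointness of the splits at $\td w^{f_0}_a$ and $\td w^{f_0}_b$, which I would want to state explicitly so that the superadditivity of $L_0$ applies without a boundary correction; this relies on the convention that thresholds are taken at midpoints of consecutive distinct values.
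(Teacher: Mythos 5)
Your proposal is correct and follows essentially the same route as the paper: the first inequality from nonnegativity of $\widehat L_2$, the second from Lemma \ref{lemma:L2-lb} applied to $\cI^{f_0}_{[a,b]}$, and the third from superadditivity of $L_0$ and hence of $L_1$ on disjoint unions (the paper's Lemmas \ref{lemma:var} and \ref{lemma:L1-super-add}), applied to the decompositions $\cI^{f_0}_{[0,t]} = \cI^{f_0}_{[0,a]} \sqcup \cI^{f_0}_{[a,t]}$ and $\cI^{f_0}_{[t,u(f_0)]} = \cI^{f_0}_{[t,b]} \sqcup \cI^{f_0}_{[b,u(f_0)]}$. Your explicit remark that disjointness holds because thresholds are midpoints of consecutive distinct values is a point the paper leaves implicit, but the argument is otherwise identical.
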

The proof of Lemma \ref{lemma:key-ineq} is presented in the appendix. 
Lemma \ref{lemma:key-ineq} shows that 
$W_0$, $W_{1,s'}$ and $W_2$ are lower bounds for $L_2$; $W_0$ is weakest and $W_2$ is the tightest of the three bounds.
See Section \ref{sect:correctness} for further discussions on the computational costs of these three lower bounds.

\section{A Branch and Bound Method for Optimal Trees with Depth $2$}\label{sect:alg}
In this section, we describe our algorithm \alg~to solve problem \eqref{eq:opt} to optimality. \alg~maintains and updates a search space  represented as disjoint unions of sets of the form $\cT_2(f_0,[a,b],F_1,F_2)$. In Section \ref{sect:quantile-decomp}, we first present a proposition illustrating how we perform quantile-based pruning on a set of the form $\cT_2(f_0,[a,b],F_1,F_2)$. In Section \ref{sect:overall strategy} we present the overall framework of \texttt{Quant-BnB}. In Section \ref{sect:correctness} we discuss the computational cost of the algorithm.

\subsection{A quantile-based pruning procedure }\label{sect:quantile-decomp}
Suppose we are given $f_0\in [p]$, integers $a,b$ with $0 \le a \le b \le u(f_0)$ and $F_1,F_2 \subseteq [p]$. 
We focus on the subset of trees $\cT_2(f_0,[a,b],F_1,F_2)$, and discuss a quantile-based method to replace this collection by a smaller subset containing an optimal solution to~\eqref{eq:opt}.

Let $(t_0,...,t_s)$ be almost $s$-equi-spaced in $[a,b]$. 
The following proposition states a basic strategy for pruning.
\begin{proposition}\label{prop: quantile-prune}
Let $W$ be a function on $2^{[n]}\times \Phi$ with $W(\cI, \phi) \le L_2(\cI,\phi)$ for all $ \phi \in \Phi$ and $\cI \subseteq [n]$; let $U$ be an upper bound of the optimal value of problem \eqref{eq:opt}. For each $j\in [s]$, define 
\begin{align}
    F_{1,j} &:= \big\{f_1\in F_1 ~\big|~ \min_{f_2\in F_2} W([n],\phi^j_{f_1,f_2}) \le U  \big\}, \label{eq:def-F1j} \\
F_{2,j} &:= \big\{f_2\in F_2 ~\big|~ \min_{f_1\in F_1} W([n],\phi^j_{f_1,f_2}) \le U  \big\},
\label{eq:def-F2j}
\end{align}
where $\phi^{j}_{f_1,f_2} := (f_0, [t_{j-1}, t_j], f_1, f_2)$. 
Then any optimal solution of \eqref{eq:opt} is not in 
\begin{equation}\label{eq:prune1}
\cT_2(f_0, [a,b], F_1, F_2) \setminus \mathop{\cup}\limits_{j=1}^s \cT_2(f_0, [t_{j-1}, t_j], F_{1,j}, F_{2,j}).
\end{equation} 
\end{proposition}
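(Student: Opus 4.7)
The plan is to use a direct contrapositive-style argument: take an optimal tree $T^\ast$ of problem \eqref{eq:opt} that lies in $\cT_2(f_0,[a,b],F_1,F_2)$ (if it does not, it trivially lies outside the set difference in~\eqref{eq:prune1}), and show that $T^\ast$ must in fact lie in $\cT_2(f_0,[t_{j-1},t_j],F_{1,j},F_{2,j})$ for some index $j$. This reduces the proposition to checking that the selection rules defining $F_{1,j}$ and $F_{2,j}$ cannot discard a tree whose total loss is at most $U$.

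First I would decode what it means for $T^\ast$ to be in $\cT_2(f_0,[a,b],F_1,F_2)$: there exist features $f_1^\ast \in F_1$, $f_2^\ast \in F_2$ and a threshold index $t^\ast \in \{a,a+1,\dots,b\}$ such that the root splits on $(f_0,t^\ast)$ and the two children split on $f_1^\ast$ and $f_2^\ast$ respectively. Because $(t_0,\dots,t_s)$ are almost $s$-equi-spaced in $[a,b]$, the subintervals $[t_{j-1},t_j]$ cover $[a,b]$, so one can pick a $j^\ast \in [s]$ with $t^\ast \in [t_{j^\ast-1}, t_{j^\ast}]$. This places $T^\ast$ inside $\cT_2(\phi^{j^\ast}_{f_1^\ast,f_2^\ast})$ where $\phi^{j^\ast}_{f_1^\ast,f_2^\ast} = (f_0,[t_{j^\ast-1},t_{j^\ast}],f_1^\ast,f_2^\ast)$.

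Next, the key chain of inequalities: since $L_2([n],\phi)$ is the minimum of $\sum_{i\in[n]}\ell(y_i,T(x_i))$ over trees $T\in\cT_2(\phi)$, the loss of $T^\ast$ is at least $L_2([n],\phi^{j^\ast}_{f_1^\ast,f_2^\ast})$; by the assumption on $W$, this in turn is at least $W([n],\phi^{j^\ast}_{f_1^\ast,f_2^\ast})$. But since $T^\ast$ is optimal and $U$ is an upper bound on the optimum, its loss is at most $U$. Combining these gives
\[
W([n],\phi^{j^\ast}_{f_1^\ast,f_2^\ast}) \;\le\; U,
\]
which immediately implies $\min_{f_2\in F_2} W([n],\phi^{j^\ast}_{f_1^\ast,f_2}) \le U$ and $\min_{f_1\in F_1} W([n],\phi^{j^\ast}_{f_1,f_2^\ast}) \le U$. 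By the definitions \eqref{eq:def-F1j} and \eqref{eq:def-F2j}, this gives $f_1^\ast \in F_{1,j^\ast}$ and $f_2^\ast \in F_{2,j^\ast}$, hence $T^\ast \in \cT_2(f_0,[t_{j^\ast-1},t_{j^\ast}],F_{1,j^\ast},F_{2,j^\ast})$, completing the argument.

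I do not foresee a real obstacle: the proof is essentially just unfolding definitions together with the monotonicity $W \le L_2$ and the covering property of the almost $s$-equi-spaced grid. The only place to be slightly careful is the boundary case when $t^\ast$ coincides with some $t_j$, but this is harmless since such a $t^\ast$ then lies in both $[t_{j-1},t_j]$ and $[t_j,t_{j+1}]$ and one can choose either. A secondary subtlety to flag (though not needed for the proof itself) is that the pruned set may actually be much larger than indicated because the definitions of $F_{1,j}, F_{2,j}$ decouple the choice of $f_1$ and $f_2$; the proposition only claims that no optimum is lost, not that the remaining union is tight.
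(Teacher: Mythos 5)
Your proof is correct and follows essentially the same route as the paper's: both arguments rest on the covering $[a,b]=\cup_{j}[t_{j-1},t_j]$ together with the chain $W([n],\phi^{j^\ast}_{f_1^\ast,f_2^\ast}) \le L_2([n],\phi^{j^\ast}_{f_1^\ast,f_2^\ast}) \le \sum_i \ell(y_i,T^\ast(x_i)) \le U$, which forces $f_1^\ast\in F_{1,j^\ast}$ and $f_2^\ast\in F_{2,j^\ast}$. The only cosmetic difference is that you argue directly (the optimum survives in some $\cT_2(f_0,[t_{j^\ast-1},t_{j^\ast}],F_{1,j^\ast},F_{2,j^\ast})$) whereas the paper argues by contradiction, but the content is identical.
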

The proof of Proposition \ref{prop: quantile-prune} is presented in the appendix. Possible choices of the lower bound $W(\cI,\phi)$ in Proposition \ref{prop: quantile-prune} are $W_0(\cI,\phi)$, $W_{1,s'}(\cI,\phi)$ and $W_2(\cI,\phi)$, as designed in Section \ref{sect:lower-bounds}.
If the assumptions of Proposition \ref{prop: quantile-prune} hold, we can replace the search space $\cT_2(f_0, [a,b], F_1, F_2)$ by a  smaller space
$\cup_{j=1}^s \cT_2(f_0, [t_{j-1}, t_j], F_{1,j}, F_{2,j})$, or equivalently, prune all the feasible solutions in \eqref{eq:prune1}. Note that it is possible that for some $j\in [s]$, the set $F_{1,j}$ or $F_{2,j}$ is empty, and hence the set $\cT_2(f_0, [t_{j-1}, t_j], F_{1,j}, F_{2,j})$ is also empty. In that case, we prune the trees in $\cT_2(f_0, [a,b], F_1, F_2)$ with splitting thresholds lying in $[t_{j-1}, t_j]$. 

\subsection{\texttt{Quant-BnB} framework}\label{sect:overall strategy}
We discuss the overall methodology of \texttt{Quant-BnB} to solve \eqref{eq:opt}. The proposed algorithm maintains and updates 
a set $\AL^{(k)}$ 
(short for ``alive") 
over iterations $k \geq 0$. $\AL^{(k)}$  contains tuples of the form
\begin{equation*}
(f_0, [a,b], F_1, F_2),
\end{equation*}
where $f_0\in [p]$; $a$ and $b$ are integers with $0\le a\le b\le u(f_0)$; and $F_1, F_2 \subseteq [p]$. 
Initially (i.e., at $k=0$), all the trees in $\cT_2$ are ``alive", so we set 
\begin{equation}\label{def:AL0}
\AL^{(0)} = \mathop{\cup}\nolimits_{f_0=1}^p \lt\{   (f_0, [0,u(f_0)], [p], [p])   \rt\}.
\end{equation}
Intuitively speaking, as the iterations progress, we reduce the size of $\AL^{(k)}$, by removing tuples $(f_0, [a,b], F_1, F_2)$ that do not contain optimal solutions to~\eqref{eq:opt}. 
The algorithm also maintains and updates the best objective value that it has found so far, denoted by $U$. Initially, we set $U $ to be the value at any feasible solution of \eqref{eq:opt}.

At every iteration $k\ge 1$, to update $\AL^{(k)}$ from $\AL^{(k-1)}$, we first set $\AL^{(k)} = \emptyset $. 
The algorithm then checks all elements in $\AL^{(k-1)}$.
For an element $(f_0, [a,b], F_1, F_2)$ in $\AL^{(k-1)}$, if $b-a$ is less than a pre-specified integer $s$ (i.e., the number of trees in the space is sufficiently small), our algorithm performs an exhaustive search to examine all candidate trees in the space $\cT_2(f_0, [a,b], F_1, F_2)$. Otherwise, the algorithm conducts the following $2$ steps. Let $(t_0,...,t_s)$ be an almost $s$-equi-spaced sequence in $[a,b]$. 
\begin{itemize}
    \item (Step1: Update upper bound) Compute
    \begin{equation}
    U'= 
    \min_{f_1\in F_1, f_2\in F_2} \big\{
        V_s([n], f_0, [a,b], f_1, f_2)
        \big\}. \nonumber
    \end{equation}
    If $U' < U$, update $U \leftarrow U'$, and update the corresponding best tree. 
    
    \item (Step2: Compute lower bound and prune) For a function $W$ on $2^{[n]}\times \Phi$ satisfying $W(\cI,\phi) \le L_2(\cI,\phi)$ for all $ \phi \in \Phi$ and $\cI \subseteq [n]$, compute
     sets $ F_{1,j}$ and $F_{2,j}$ as in \eqref{eq:def-F1j} and \eqref{eq:def-F2j} (for all $j\in [s]$), and update 
    \begin{equation}
        \AL^{(k)} =  \AL^{(k)} \bigcup \Big( \cup_{j=1}^s \{(f_0, [t_{j-1}, t_j], F_{1,j}, F_{2,j})\} \Big). \nonumber
    \end{equation}
    
\end{itemize}
{Note that in Step 2 above, we need to compute values of $W([n], \phi_{f_1,f_2}^{j})$ as in \eqref{eq:def-F1j} and \eqref{eq:def-F2j}--these are lower bounds of $L_2([n], \phi_{f_1,f_2}^{j})$. Function $W$ can be taken as $W_0$, $W_{1,s'}$ or $W_2$, as introduced in Section \ref{sect:lower-bounds}. At the end of Step 2, the set $\cup_{j=1}^s \{(f_0, [t_{j-1}, t_j], F_{1,j}, F_{2,j})\}$ is added into $\AL^{(k)}$; this set replaces the tuple $ (f_0, [a,b], F_1, F_2)$ in $\AL^{(k-1)}$. In other words, all the trees that lie in $\cT_2(f_0, [a,b], F_1, F_2)$ but not in $\cup_{j=1}^s \cT_2(f_0, [t_{j-1}, t_j], F_{1,j}, F_{2,j}) $ are pruned.}

The main steps of the algorithm above are summarized in Algorithm \ref{alg:bb-depth2}. For illustration, a single iteration of \texttt{Quant-BnB} on a toy example is provided in appendix.

\begin{algorithm}[htb]
   \caption{\texttt{Quant-BnB} for depth-2 decision trees}
   \label{alg:bb-depth2}
\begin{algorithmic}
   \STATE {\bfseries Input:} data $\{(x_i,y_i)\}_{i=1}^n$, an integer $s\ge2$, an initial upper bound $U$~ and a feasible solution $\hat T$.
   \STATE Initialize $\AL^{(0)}$ as in \eqref{def:AL0}, and
   set $k=1$.
   \REPEAT
\STATE Set $\texttt{AL}^{(k)} = \emptyset$.
   \FOR{{\bfseries each} $( f_0, [a,b], F_1, F_2 )$ in $\AL^{(k-1)}$}
\IF {$b-a \le s$}
\STATE Use exhaustive search to get the bound 
\\$U' = \min_{f_1\in F_1, f_2\in F_2} ~L_2([n],f_0,[a,b],f_1,f_2)$.
\STATE Update $U=\min\{U,U'\}$ and accordingly, the current best solution $\hat T$.
\STATE {\bfseries continue}
\ENDIF
 \STATE Let $(t_0, t_1, \dots, t_{s})$ be almost $s$-equi-spaced in $[a,b]$. 
 \STATE Perform Steps 1 and~2. 
   \ENDFOR
   \STATE Update $k\leftarrow k+1$.
   \UNTIL{ $\texttt{AL}^{(k)}$  is empty}
 \STATE {\bfseries Output:} The optimal decision tree $\hat T$ and corresponding objective value $U$.
\end{algorithmic}
\end{algorithm}

\subsection{Correctness of~\alg, computational cost}\label{sect:correctness}
Theorem~\ref{thm:convergence} (see Appendix for proof) establishes that \alg~converges to the global optimum of~\eqref{eq:opt}.
\begin{theorem}\label{thm:convergence}
Algorithm \ref{alg:bb-depth2} terminates in at most $\lceil log_{s}(n)\rceil$ iterations and yields an optimal solution of \eqref{eq:opt}. 
\end{theorem}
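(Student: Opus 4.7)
\medskip
\noindent\textbf{Proof plan.} The theorem has two parts: (i) a termination bound of $\lceil\log_s(n)\rceil$ iterations, and (ii) the optimality of the returned tree $\hat T$. I would treat them separately, starting with termination since it is the easier of the two.

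\medskip
\noindent\emph{Termination bound.} The plan is to track the quantity
\[
D^{(k)} \;:=\; \max\bigl\{\, b-a \;:\; (f_0,[a,b],F_1,F_2)\in \AL^{(k)}\,\bigr\},
\]
with $D^{(k)}:=0$ if $\AL^{(k)}=\emptyset$. At initialization, $D^{(0)}\le \max_{f_0}u(f_0)\le n$. During an iteration, any tuple with $b-a\le s$ is handled by exhaustive search and contributes nothing to $\AL^{(k)}$, while any tuple with $b-a>s$ is subdivided using an almost $s$-equi-spaced partition $(t_0,\dots,t_s)$. From the definition $t_j=\lfloor a+(j/s)(b-a)\rfloor$, each sub-interval $[t_{j-1},t_j]$ has length at most $\lceil (b-a)/s\rceil$, so $D^{(k)}\le \lceil D^{(k-1)}/s\rceil$. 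Iterating this inequality and unfolding the ceilings in the standard way yields $D^{(k)}\le \lceil n/s^{k}\rceil$; once $k$ satisfies $n/s^k\le s$, i.e.\ $k\ge \log_s(n)-1$, every surviving tuple is handled by exhaustive search on the next pass and $\AL^{(k+1)}=\emptyset$. A careful accounting of the ceilings gives termination within $\lceil\log_s(n)\rceil$ iterations.

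\medskip
\noindent\emph{Optimality.} Fix any optimal tree $T^{*}$ of \eqref{eq:opt} with root feature $f_0^{*}$, root threshold index $t^{*}$, and children split features $f_1^{*},f_2^{*}$. The key invariant I would prove by induction on $k$ is:
\begin{equation*}
\text{either } U=\mathrm{opt},\text{ or there is }(f_0,[a,b],F_1,F_2)\in\AL^{(k)}\text{ with }f_0=f_0^{*},\ t^{*}\in[a,b],\ f_1^{*}\in F_1,\ f_2^{*}\in F_2.
\end{equation*}
The base case $k=0$ is immediate from \eqref{def:AL0}. For the inductive step, assume the invariant at $k-1$. If $U$ has already reached $\mathrm{opt}$, it cannot increase, so we are done. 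Otherwise, pick the tuple $\tau=(f_0^{*},[a,b],F_1,F_2)\in\AL^{(k-1)}$ guaranteed by the hypothesis. If $b-a\le s$, the exhaustive search over $\cT_2(\tau)$ evaluates $T^{*}$ itself (since $T^{*}\in\cT_2(\tau)$), forcing $U=\mathrm{opt}$. Otherwise, let $j$ be the unique index with $t^{*}\in[t_{j-1},t_j]$. The one crucial check is that $f_1^{*}\in F_{1,j}$ and $f_2^{*}\in F_{2,j}$; this is where the lower-bound hypothesis on $W$ is used. Writing $\phi^{j}_{f_1^{*},f_2^{*}}=(f_0^{*},[t_{j-1},t_j],f_1^{*},f_2^{*})$, the chain
\[
W([n],\phi^{j}_{f_1^{*},f_2^{*}})\;\le\;L_2([n],\phi^{j}_{f_1^{*},f_2^{*}})\;\le\;\sum_{i=1}^n\ell\bigl(y_i,T^{*}(x_i)\bigr)\;=\;\mathrm{opt}\;\le\;U
\]
uses the assumed property of $W$, the fact that $T^{*}\in\cT_2(\phi^{j}_{f_1^{*},f_2^{*}})$, and the fact that $U$ is always a feasible upper bound on $\mathrm{opt}$. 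This gives $\min_{f_2\in F_2}W([n],\phi^{j}_{f_1^{*},f_2})\le U$, hence $f_1^{*}\in F_{1,j}$; symmetrically $f_2^{*}\in F_{2,j}$. Therefore $(f_0^{*},[t_{j-1},t_j],F_{1,j},F_{2,j})\in\AL^{(k)}$ satisfies the invariant at step $k$.

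\medskip
\noindent\emph{Combining the two.} At the terminal iteration $k$ we have $\AL^{(k)}=\emptyset$, so the invariant forces $U=\mathrm{opt}$, and by construction the tree $\hat T$ achieving this value is the one returned. The delicate step is the inductive check that the optimal triple $(f_0^{*},f_1^{*},f_2^{*})$ survives every pruning round; the rest is bookkeeping on interval lengths. I would anticipate that the main subtlety lies in (a) handling the ceiling arithmetic cleanly for the $\lceil\log_s n\rceil$ bound and (b) justifying the inequality $U\ge\mathrm{opt}$ at every moment of the algorithm, which relies on $U$ being initialized to a feasible value and only updated to objective values of feasible trees.
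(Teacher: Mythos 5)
Your proposal is correct and follows essentially the same route as the paper: the termination argument via the maximal interval length $C_k \le \lceil C_{k-1}/s\rceil$ and the unfolding to $\lceil n/s^k\rceil$ is identical, and your optimality argument rests on the same inequality chain $W \le L_2 \le \mathrm{opt} \le U$ that the paper isolates as Proposition \ref{prop: quantile-prune}. The only difference is presentational --- you carry an explicit inductive invariant (``the optimal triple survives in $\AL^{(k)}$ unless $U$ already equals $\mathrm{opt}$'') and re-derive the pruning-safety fact inline, whereas the paper argues by contradiction and cites the proposition; both are sound.
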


\noindent{\bf Computational cost:} We discuss the computational cost of Steps 1 \& 2 for a given $(f_0, [a,b], F_1, F_2)$. 
To simplify the discussion, we consider the case when the number of unique values of feature $f_0$ i.e., $u(f_0)$ is $n$. 

Note that in Algorithm \ref{alg:bb-depth2}, a function $W$ satisfying $W(\cI, \phi) \le L_2(\cI,\phi)$ for all $ \phi \in \Phi$ and $\cI \subseteq [n]$ is needed. Candidates of the lower bound $W$ have been discussed in Section \ref{sect:lower-bounds}.
Different choices of $W$ have different computational costs. 
The simplest choice is to set $W = L_2$ directly, 
in which case Step 2 of the algorithm reduces to an exhaustive search over $\cT_2(f_0, [a,b], F_1, F_2)$, which is expensive. 
By Lemma \ref{lemma:key-ineq}, it is also possible to take $W$ to be $W_0$, $W_{1,s'}$ (for some proper integer $s'$) or $W_2$. 
We compare the computational cost of Step 1 -- Step 2 under these different choices, as shown below. 
\begin{lemma}\label{lemma: compare-cost}
Suppose $L_0$ is given by \eqref{L0-reg} or \eqref{L0-cls}. 
For a given $(f_0, [a,b], F_1, F_2)$, denote $ \td p := |F_1|+|F_2|$. Suppose $u(f_0)=n $. Let $s, s'$ be positive integers with $s'\cdot s \le b-a$. The computational cost of Steps~1-2 for different choices of the lower-bound function $W$ are as follows: \newline
(1) If $W = W_{0}$, the cost is bounded by $\wtd O(n\td p s)$. \newline
(2) If $W = W_{1,s'}$, the cost is bounded by $\wtd O(n\td p s + \td p s' {(b-a)})$.\newline
(3) If $W = W_{2}$, the cost is bounded by $\wtd O(n\td p s + \frac{(b-a)^2 \td p}{s})$. \newline
(4) If $W = L_2$, the cost is bounded by $\wtd O(n\td p (b-a))$.
\end{lemma}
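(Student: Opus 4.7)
My plan is to reduce the analysis to counting invocations of the subroutine $L_1(\cJ, f)$, since by the remark following \eqref{eq:defL1} each such call costs $\wtd O(|\cJ|)$ under the regression or classification loss. Two bookkeeping observations drive the proof. First, the left-subtree loss depends only on $f_1$ and the right on $f_2$, so any minimum of the form $\min_{f_1\in F_1,f_2\in F_2}[L_1(\cdot,f_1)+L_1(\cdot,f_2)]$ decouples into $\min_{f_1}L_1(\cdot,f_1)+\min_{f_2}L_1(\cdot,f_2)$; this turns an $|F_1|\cdot|F_2|$ enumeration into a $|F_1|+|F_2|=\td p$ one. Second, because $u(f_0)=n$, each sample carries a distinct $f_0$-value, hence under almost $s$-equi-spacing the subset $[n]^{f_0}_{[t_{j-1},t_j]}$ has at most $\lceil(b-a)/s\rceil$ elements.

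I will first handle Step~1 uniformly. Computing $U'=\min_{f_1,f_2}V_s([n],\phi_{f_1,f_2})$ requires, for each of the $s+1$ thresholds $t_j$, the values $L_1([n]^{f_0}_{[0,t_j]},f_1)$ for every $f_1\in F_1$ and $L_1([n]^{f_0}_{[t_j,u(f_0)]},f_2)$ for every $f_2\in F_2$, each a call on a set of size at most $n$; by the decoupling observation this totals $\wtd O(n\td p s)$. This serves as the Step~1 baseline in all four cases.

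For Step~2, the $W_0$ bound follows the same separable pattern as $V_s$ evaluated at the boundary thresholds $t_{j-1},t_j$, giving the same $\wtd O(n\td p s)$. For $W_{1,s'}$, the additional term $\widehat L_2([n]^{f_0}_{[t_{j-1},t_j]},\phi^j_{f_1,f_2},s')$ is a minimum over $s'$ separable sums of $L_1$ on the middle subset of size $O((b-a)/s)$; decoupling over $f_1,f_2$ and summing over $j\in[s]$ yields the $\wtd O(\td p s'(b-a))$ term. For $W_2$, $L_2$ on the middle subset is an exhaustive minimum over $O((b-a)/s)$ thresholds of separable sums of $L_1$ on sub-subsets of size $O((b-a)/s)$, so after decoupling the per-feature, per-$j$ cost is $\wtd O((b-a)^2/s^2)$, giving $\wtd O(\td p (b-a)^2/s)$. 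Finally, for $W=L_2$ direct evaluation is an exhaustive minimum over $b-a$ thresholds of separable sums of $L_1$ on subsets of $[n]$, producing $\wtd O(n\td p(b-a))$, which absorbs the Step~1 baseline since $b-a\geq s$.

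The main obstacle is not mathematical depth but disciplined bookkeeping: the stated bounds crucially depend on (i) factoring the $f_1$ and $f_2$ minimizations so one pays $\td p$ rather than $\td p^2$, and (ii) using $u(f_0)=n$ together with almost-equi-spacing to certify the middle subset size is $O((b-a)/s)$ rather than $O(n)$. Missing either observation inflates the bounds, so the proof is essentially a careful cost accounting that maintains both invariants in every case.
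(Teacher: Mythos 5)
Your proposal is correct and follows essentially the same route as the paper's proof: reduce everything to counting calls to the $\wtd O(|\cJ|)$ subroutine $L_1$, avoid the $|F_1|\cdot|F_2|$ blow-up by precomputing and separating the $f_1$- and $f_2$-minimizations (the paper's $J_1,\dots,J_4$ bookkeeping), and use $u(f_0)=n$ with almost-equi-spacing to bound the middle subsets by $O((b-a)/s)$. The resulting bounds in all four cases match the paper's.
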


Note that the assumption $s\cdot s' \le b-a$ is necessary to make sure the equi-spaced sequences are well-defined. 
By this assumption, we have $ n\td p s \le 
n\td p s + \td p s' {(b-a)} \le n\td p s + {(b-a)^2 \td p}/{s} \lesssim n\td p (b-a)$.
Lemma \ref{lemma: compare-cost} implies that
\begin{equation}\label{eq:lowerbound-qual}
    W_0 \prec W_{1,s'} \prec W_2 \prec L_2,
\end{equation}
where the notation ``$\bar{W} \prec \tilde{W}$" means that the cost (of Steps~1 \& 2) in using $W = \bar{W}$ is bounded by a constant multiple of the cost of using $W = \tilde{W}$. 
On the other hand, by Lemma  \ref{lemma:key-ineq}, it is known that 
\begin{equation}\label{eq:lowerbound-time}
W_0(\cI, \phi) \le W_{1,s'}(\cI, \phi) \le W_2(\cI, \phi) \le 
L_2(\cI, \phi) 
\end{equation}
for all $\phi \in \Phi $. Therefore, among these four choices of $W$, there is a tradeoff: the choice with lower computational time for Step 1 -- Step 2 will produce a less tight lower bound and may result in a case where fewer trees are pruned in this iteration. 
Empirically, we find that among these four choices, 
choosing $W = W_{1,s'}$ (for a proper $s'$) 
has the best performance in most cases (see Section \ref{sect:numexp1} for a comparison of these choices). A choice of $s'$ that appears to work well in practice is $s' \approx \frac{ns}{b-a} $, in which case the cost in Lemma \ref{lemma: compare-cost} (2) reduces to $\wtd O(n\td p s)$. With this choice of $s'$, the cost of 
Step 1--2 using $W = W_{1,s'}$ is the same (up to a constant multiple) as the cost of choosing $W = W_0$, but the former always provides a better lower bound. 

We note that when choosing $W = W_{2}$ or $W = L_2$, the cost of Algorithm \ref{alg:bb-depth2} is not linear in $n$. Indeed, initially ($k=0$), for any 
$(f_0, [a,b], F_1, F_2)$ in $\AL^{(0)}$, it holds $b-a = n$, and $\td p =2p$. So by Lemma \ref{lemma: compare-cost} (4) the cost of Steps~1--2 with $W = L_2$ is $\wtd O(n^2\td p )$; and with $W = W_2$ the cost is $\wtd O(nps + ({n^2p}/{s})) \ge \wtd O(n^{3/2}p)$.

\section{Extension to Deeper Trees}\label{sect:extension}

Algorithm \ref{alg:bb-depth2} can be generalized for the computation of optimal trees with a fixed depth $d$ ($d\ge 3$). 
We briefly discuss the case when $d=3$.
To compute an optimal tree of depth $3$, 
\alg~maintains and updates a set $\texttt{AL3}$ that contains tuples of the form 
\begin{equation}\label{example-tuple-d=3}
(f_0,[a,b],\Phi_1,\Phi_2),
\end{equation} 
where $f_0\in [p]$, $0\le a\le b\le u(f_0)$, and $\Phi_1, \Phi_2\subseteq \Phi$.
Recall that $\Phi$ is defined in \eqref{def:Phi}, which contains tuples corresponding to subsets of depth-2 trees. 
Each tuple \eqref{example-tuple-d=3} corresponds to a search space in which the elements meet the following conditions: the root node $(f_0,t_0)$ satisfies $t_0\in[a,b]$; also, the left and right branch of the root node, which are decision trees of depth $2$, 
are in $\cT_2(\phi_1)$ and $\cT_2(\phi_2)$ for some $\phi_1 \in \Phi_1$ and $\phi_2 \in\Phi_2$ respectively. 

To shrink the search space corresponding to \eqref{example-tuple-d=3}, we set up an almost equi-spaced sequence of integers and work with lower bounds for each smaller search space.
Due space limits, we present the details of the algorithm and related discussions in Section \ref{app-sect: depth-3}.

For the case $d\ge 4$, similar recursion can be applied to design BnB algorithms, but the computational cost increases especially when $p$ (\# of features) is large. Therefore, we recommend using our procedure for fitting optimal decision trees with $d\le 3$. 

So far, we only consider perfect binary trees (i.e., depth-$d$ trees that has exactly $2^d-1$ branch nodes). In practice, it is preferable to optimize over non-perfect trees to enhance generalization capability, especially for deeper trees. Note that we can modify \alg~to handle non-perfect trees by considering two cases for each node--it can be a branch node or a leaf--when calculating upper and lower bounds. This modification will not result in additional computation costs.

\section{Numerical Experiments}\label{sect:exp}

In this section, we study the performance of \texttt{Quant-BnB} in terms of runtime and prediction accuracy. In particular we study:~({i})~ differences in the efficiency of \texttt{Quant-BnB} using various methods to calculate the lower bound of $L_2(\cI,\phi)$  ~({ii})~computational cost of optimal trees (depths $2$ and $3$) on classification tasks compared to state-of-the-art algorithms ~({iii})~out-of-sample accuracy compared to heuristic methods. We present details of experimental setup and results on regression tasks in appendix Section~\ref{app-sect:exp}.

\textbf{Datasets and Computing Environment:} We collect 16 classification (binary and multi-class) and 11 regression datasets from UCI Machine Learning Repository \cite{Dua:2019}. All experiments are carried out on MIT's Engaging cluster on Intel Xeon 2.30GHz machine, with a single CPU core and 25GB of RAM. Our algorithm implementation can be found on 
GitHub\footnote{{\small{\url{https://github.com/mengxianglgal/Quant-BnB}}}}.

\subsection{Comparison of different lower bounds}\label{sect:numexp1}

Recall that \alg~ requires a lower-bound function $W$ such that $W(\cI, \phi) \le L_2(\cI,\phi)$ for all $ \phi \in \Phi$ and $\cI \subseteq [n]$. In addition, the efficiency of the algorithm depends largely on the choice of $W$. We have developed 4 possible lower bounds---$W_0$, $W_{1,s'}$, $W_2$ and $L_2$, and theoretically studied the quality and computational cost of them (see \eqref{eq:lowerbound-qual} and \eqref{eq:lowerbound-time}). In this experiment, we figure out their practical performance.

We set the parameter $s$ in Algorithm \ref{alg:bb-depth2} to be $3$, and the parameter $s'$ is dynamically chosen as $\lfloor\frac{0.6ns}{b-a}\rfloor$ for tuple $\phi=(f_0,[a,b],f_1,f_2)$. As discussed in Section \ref{sect:correctness}, the computational cost of $W_0$ and $W_{1,s'}$ is linear w.r.t $n$ under such setting, while computing $W_{2}$ and $L_{2}$ in the first iteration of \alg~costs $O(n^2)$.

We implement \alg~ with the lower-bound function $W$ in Step 1-2 chosen as $W_0,\,W_{1,s'},\,W_2$ and $L_2$, respectively. Table \ref{tab:differentlb} displays the computation time of these four choices on UCI datasets with number of data points $n\ge 10^4$. Although $W_2$ produces a tighter lower bound compared to $W_0$ and $W_{1,s'}$ (which helps \alg~prune more trees in every iteration), taking $W=W_2$ still has a bad performance due to its expensive computational cost. In contrast, proposed lower bounds $W_0$ and $W_{1,s'}$ result in significant speedups compared to computing $L_2$ directly. In the following experiments, we always choose $W=W_{1,s'}$ to reduce computational cost.

 \begin{table}[H]
\small
\begin{adjustbox}{width=0.55\columnwidth,center}
\begin{tabular}{ll|cccc} \Xhline{3\arrayrulewidth}
Name    &(n,p)      & $W_0$ & $W_{1,s'}$ & $W_2$ & $L_2$  \\ \hline
avila   &(10432,10)  & 5.1   & 4.5        & -     & 519   \\ 
bean    &(10888,16)  & 3.0   & 3.4        & -     & -     \\
eeg     &(11984,14)  & 2.9   & 2.9        & -   & 47    \\
htru    &(14318,8)   & 1.4   & 1.3        & 244   & 515   \\
magic   &(15216,10)  & 1.2   & 1.0        & -     & -     \\
skin    &(196045,3)  & 2.0   & 2.1        & -     & 31    \\
casp    &(36584,9)   & 6.7   & 4.2        & -     & -     \\
energy  &(15788,28)  & 18    & 14         & -     & -     \\
gas     &(29386,10)  & 1.5   & 1.5        & -     & 444   \\
news    &(31715,59)  & 301   & 349        & -     & -     \\
query2   &(159874,4)  & 15    & 9.8        & -     & -     \\
\Xhline{3\arrayrulewidth}
\end{tabular}
\end{adjustbox}
\caption{\small{\alg~with four different methods for lower bound computation on depth-2 trees. For each dataset, the number of observations and the number of features are provided. Each entry denotes running time in seconds. Symbol `-' refers to time out (10min).  }}
\label{tab:differentlb}
\end{table}

\subsection{Comparison with state-of-the-art optimal methods}\label{sect:numexp2}

We compare our algorithm with the recently proposed methods for solving optimal classification trees: BinOCT \cite{verwer2019learning}, MurTree \cite{JMLR:v23:20-520} and DL8.5 \cite{aglin2020learning}.  We also tested other competing algorithms, but they all took substantially longer time to deliver optimal trees---see Appendix for details.
Since MurTree and DL8.5 apply to datasets with binary features, we adopt the equivalent-conversion pre-processing used in \cite{lin2020generalized} by encoding each continuous feature $f$ to a set of $u(f)-1$ binary features, using all possible thresholds.

The computation time for learning optimal shallow trees (depth $=2$, $3$) on classification tasks is presented in Table~\ref{tab:class}. 
\texttt{Quant-BnB} can solve depth-$2$ trees in a few seconds--a speedup of several orders of magnitude compared to other methods. When the depth is $3$, \texttt{Quant-BnB} still outperforms competing algorithms by a large margin on 13 of 16 datasets. Although being highly effective on datasets with purely binary features, MurTree and DL8.5 
can be expensive to deliver optimal trees on datasets with continuous features. 
This is perhaps due to the increase in number of features while converting the continuous features to binary features. On a related note, it is worth pointing out that one may use approximate methods to convert continuous to binary 
features~\cite{JMLR:v23:20-520}---however, such approximate schemes may result in a lossy compression of the training data as shown in our experiments in the Appendix. Note also that we do not use any compression of features for \alg. The numerical results illustrate the effectiveness of \alg~in solving shallow optimal trees on large datasets with continuous features.

\begin{table}[!b]
\begin{adjustbox}{width=0.98\columnwidth,center}
\scalebox{0.9}{\begin{tabular}{ll|cccc|cccc} \Xhline{3\arrayrulewidth}
Dataset
& \multirow{2}{*}{(n,p)} & \multicolumn{4}{c|}{depth=2}        & \multicolumn{4}{c}{depth=3}        \\
Name                      &                    & Quant-BnB &BinOCT& MurTree & DL8.5 & Quant-BnB &BinOCT& MurTree & DL8.5  \\ \hline
avila&(10430,10)&\textbf{ 4.5 }&(1.3\%)&OoM&3278 &\textbf{ 4188 }&OoM&OoM&OoM\\
bank&(1097,4)&\textbf{ \textless 0.1 }&2963 &8.4 &4.6 &\textbf{ 4.4 }&(100\%)&142 &-\\
bean&(10888,16)&\textbf{ 3.4 }&(0\%)&OoM&OoM&\textbf{ 1014 }&OoM&OoM&OoM\\
bidding&(5056,9)&\textbf{ 0.2 }&(1.1\%)&345 &72 &\textbf{ 30 }&(154\%)&6252 &OoM\\
eeg&(11984,14)&\textbf{ 2.9 }&(6.3\%)&288 &34 &4042 &(13\%)&\textbf{ 1783 }&OoM\\
fault&(1552,27)&\textbf{ 1.6 }&(9.1\%)&530 &271 &-&(34\%)&-&OoM\\
htru&(14318,8)&\textbf{ 1.3 }&(7.7\%)&OoM&OoM&\textbf{ 10303 }&OoM&OoM&OoM\\
magic&(15216,10)&\textbf{ 1.0 }&(2.6\%)&OoM&OoM&\textbf{ 1090 }&(14\%)&OoM&OoM\\
occupancy&(8143,5)&\textbf{ 0.3 }&(2.3\%)&193 &33 &\textbf{ 106 }&(28\%)&1692 &OoM\\
page&(4378,10)&\textbf{ 0.4 }&(0\%)&155 &84 &\textbf{ 471 }&(35\%)&-&OoM\\
raisin&(720,7)&\textbf{ 0.1 }&9590 &13 &6.2 &\textbf{ 167 }&(6.6\%)&432 &-\\
rice&(3048,7)&\textbf{ 0.4 }&(3.0\%)&591 &267 &\textbf{ 1340 }&(11\%)&-&OoM\\
room&(8103,16)&\textbf{ 1.0 }&(25\%)&18 &14 &\textbf{ 180 }&(239\%)&269 &-\\
segment&(1848,18)&\textbf{ 1.1 }&(0.5\%)&389 &213 &\textbf{ 153 }&(250\%)&-&OoM\\
skin&(196045,3)&\textbf{ 2.3 }&(28\%)&37 &16 & 350 &(9.9\%)&\textbf{112} &-\\
wilt&(4339,5)&\textbf{ 0.2 }&(0\%)&653 &314 &\textbf{ 67 }&(56\%)&-&OoM\\
\Xhline{3\arrayrulewidth}
\end{tabular}}
\end{adjustbox}
\caption{\small{Comparison of  \alg~aginst BinOCT, MurTree and DL8.5 on 16 classification datasets. For each dataset, the number of observations and the number of features are provided. Each entry denotes running time in seconds. - refers to time out (4h), OoM refers to out of memory (25GB).  If BinOCT times out, we display the relative difference $(L_B-L_Q)/L_Q$ as a percentage instead, where $L_B$ and $L_Q$ are the training errors of BinOCT and \texttt{Quant-BnB}, respectively.}}
\label{tab:class}
\end{table}

\subsection{Comparison with heuristic methods}\label{sect:numexp3}

We study the test-accuracy of optimal decision trees. 
Earlier work~\cite{verwer2019learning,JMLR:v23:20-520} in classification with binary features suggest that optimal trees can lead to better test-performance compared to heuristics.
We explore if similar empirical findings hold true for the tasks we consider herein.
We compare our approach with the well-known algorithm CART \cite{breiman1984classification} and Tree Alternating Optimization (TAO) \cite{carreira2018alternating}---both based on heuristics. Both CART, TAO consider the same models as \texttt{Quant-BnB}, namely, axis-aligned trees with depth $2$ or $3$.

We compare the test error on a collection of 27 datasets: 16 classification and 11 regression tasks (see Appendix for details). Since the range of loss function of each test set varies, we study the relative loss $(L^{te}_C-L^{te}_Q)/L^{te}_Q$, where $L^{te}_Q$ is test error of \alg, and  $L^{te}_C$ is the test error of the competing algorithm (here, $C$ is CART or TAO).
The results are summarized in Fig \ref{fig:test}. We observe from the figure that \texttt{Quant-BnB} obtains depth$-2$ trees with lower test error in more than $66\%$ datasets. When the depth is $3$, \alg~leads to better generalization in most of cases compared to CART and TAO. The prediction performance of TAO is slightly better than CART, at the expense of higher computational cost. The results indicate that optimal trees delivered by \texttt{Quant-BnB} offer an edge compared to heuristic methods, especially for deeper trees.

\begin{figure}[!b]
    \centering\vspace{-7mm}
    \subfloat[Comparison with CART, Depth$=2$\label{subfig-1}]{
      \includegraphics[width=0.5\textwidth]{./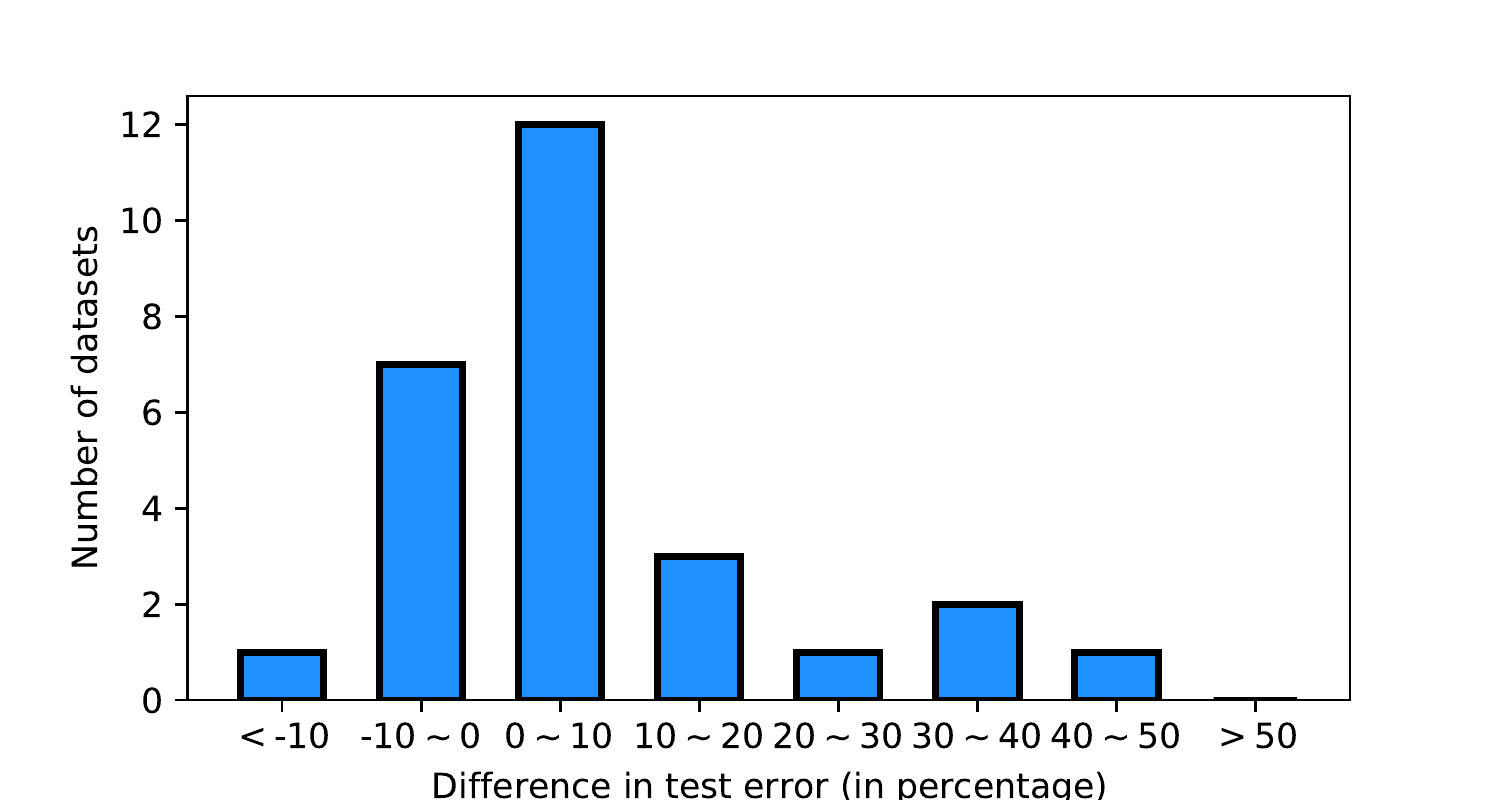}
    }
    \subfloat[Comparison with TAO, Depth$=2$\label{subfig-2}]{
       \includegraphics[width=0.5\textwidth]{./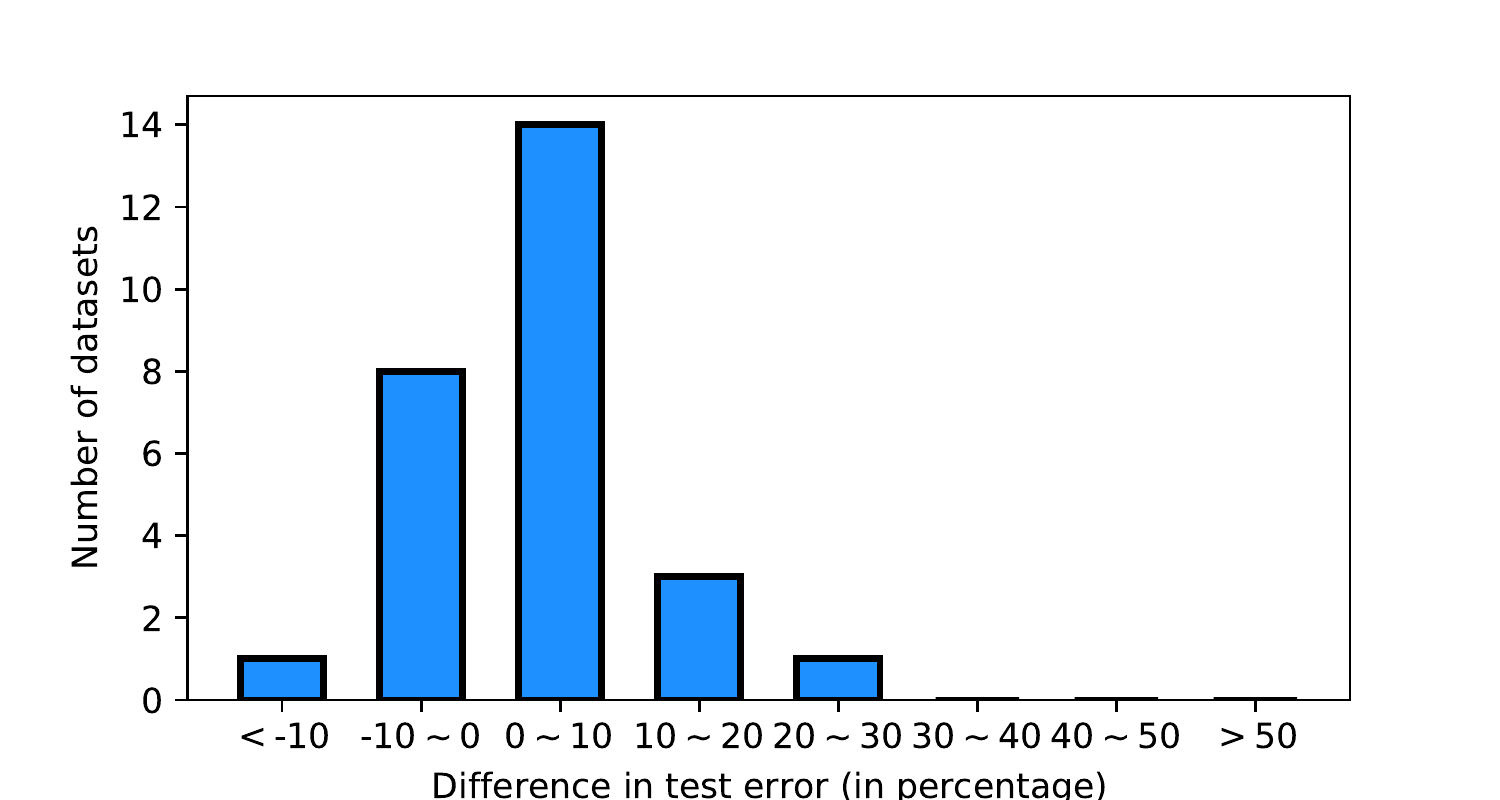}
    }\\ \vspace{-2mm}
    \subfloat[Comparison with CART, Depth$=3$\label{subfig-3}]{
      \includegraphics[width=0.5\textwidth]{./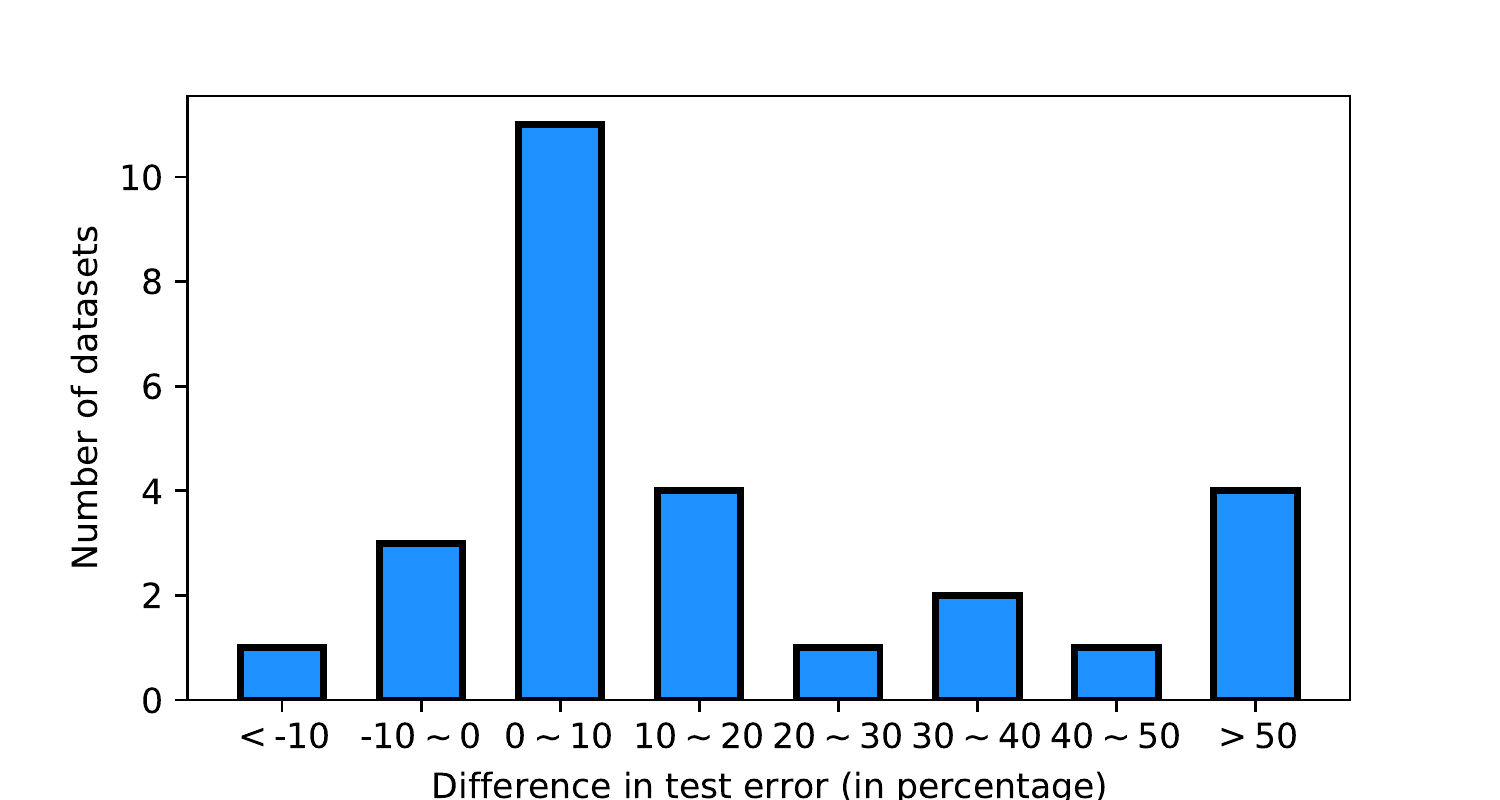}
    }
    \subfloat[Comparison with TAO, Depth$=3$\label{subfig-4}]{
       \includegraphics[width=0.5\textwidth]{./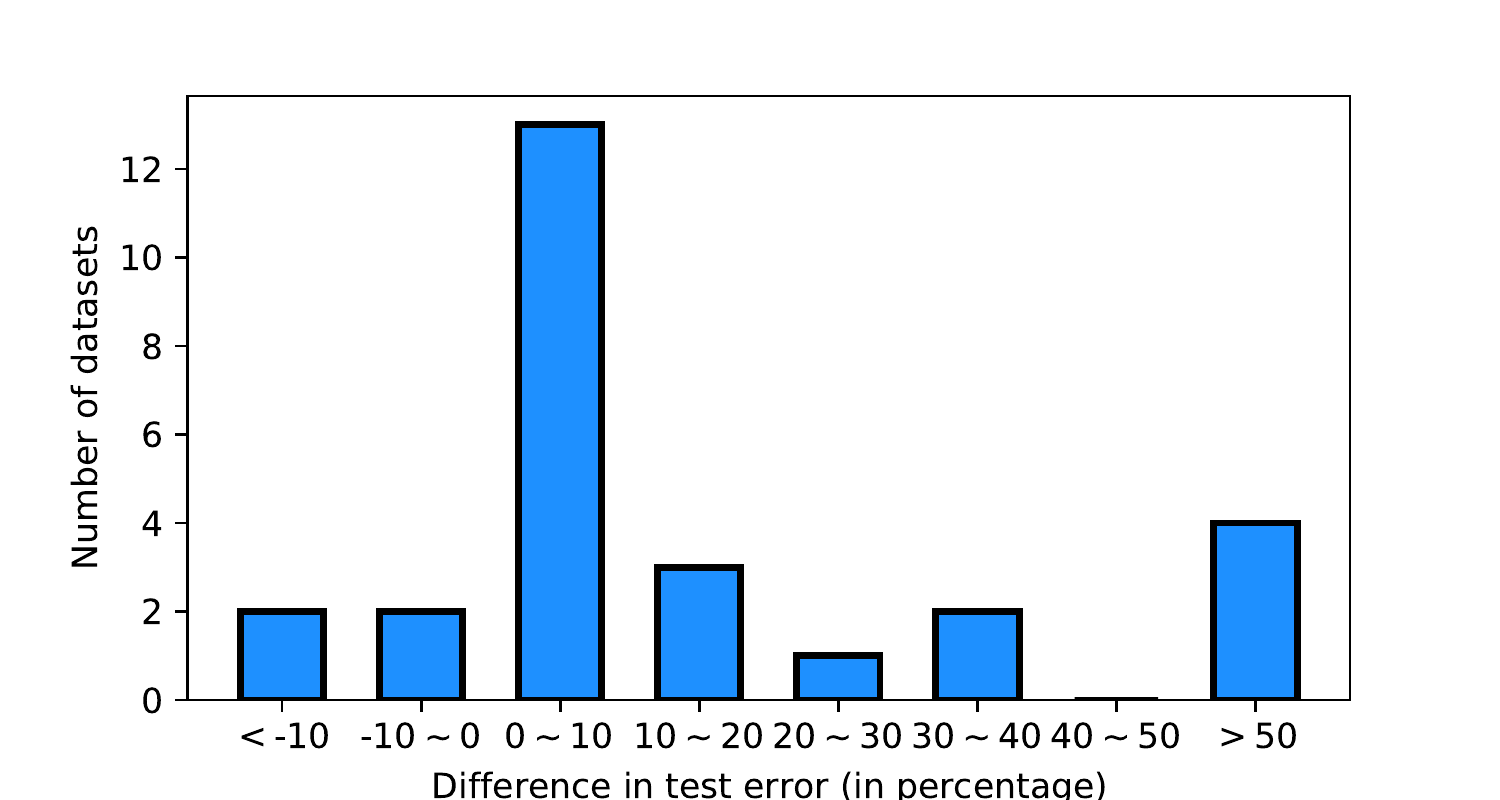}
    }\vspace{-2mm}
    \caption{\small{Performance comparison of \texttt{Quant-BnB} against CART  and TAO on 27 datasets. $L_C^{te}$ is test error of algorithm $C$ (i.e., CART or TAO) and $L_Q^{te}$ is test error of \texttt{Quant-BnB}. We summarize the 
    relative difference $(L_C^{te}-L_Q^{te})/L_Q^{te}$-values between
    a heuristic method (C) and optimal decision trees delivered by \texttt{Quant-BnB} (Q), shown in percentages using bar charts. A positive value of the x-axis means \alg~performs better than competing methods.}}\label{fig:test}
\end{figure}

\section{Conclusions and Discussions}

We present a novel BnB framework for optimal decision trees that applies to both regression and classification problems with continuous features.  This extend the scope of optimal procedures in the literature that have been developed for classification problems with binary features. Our approach is based on partitioning the feature values based on quantiles and using them to generate upper and lower bounds. We discuss convergence guarantees of our procedure. Numerical experiments suggest the 
efficiency of our approach for shallow decision trees.

Although a single optimal shallow tree appears to be somewhat restrictive in terms of prediction, it can be useful for interpretability (it can be difficult to interpret trees with depth much larger than 3). 
Additionally, a heuristic procedure (e.g., CART) may require a larger depth to achieve the same training/test error as
an optimal tree with $d=3$. To improve prediction performance of a single shallow tree, one can use an ensemble (e.g., random forest, Boosting) of shallow trees.

\section*{Acknowledgements} We thank the reviewers for their comments that resulted in improvements in the paper. 
This research is supported in part by grants from the Office of Naval Research (N00014-21-1-2841) and
Liberty Mutual Insurance.

\appendix
\section{Examples}

\subsection{An example of the space $\cT_2(f_0, [a,b], F_1, F_2)$}

We consider a classification dataset with $n = 6$, $p=3$. The feature vectors and labels are provided in Table \ref{tab:example}.
\begin{table}[htbp!]
\centering
\begin{tabular}{l|llllll} \Xhline{3\arrayrulewidth}
Data       & $x_1$ & $x_2$ & $x_3$ & $x_4$ & $x_5$ & $x_6$ \\ \hline
Feature 1 & 1 & 2 & 3 & 3 & 4 & 5 \\
Feature 2 & 0 & 1 & 2 & 3 & 4 & 5 \\
Feature 3 & 0 & 0 & 3 & 3 & 5 & 5 \\
Label     & 1 & 2 & 1 & 2 & 1 & 2 \\
\Xhline{3\arrayrulewidth}
\end{tabular}
\caption{An classification example with $6$ samples. Each has $3$ features and a label in $\{1,2\}$. }
\label{tab:example}.
\end{table}

Then we have $u(1) = 5$, and
\begin{equation}
    w_0^{1} = -\infty, 
    ~~  w_1^{1} = 1,
    ~~ w_{2}^{1} = 2, 
    ~~ w_{3}^{1} = 3, 
    ~~ w_{4}^{1} = 4, 
    ~~ w_{5}^{1} = 5, 
    ~~ w_{6}^{1} = \infty,
\end{equation}
\begin{equation}
    \td w_0^{1} = -\infty, 
    ~~ \td w_1^{1} = 1.5, 
    ~~ \td w_2^{1} = 2.5, 
    ~~ \td w_3^{1} = 3.5,
    ~~ \td w_4^{1} = 4.5,
    ~~ \td w_5^{1} = \infty.
\end{equation}
As an example, the set $\cT_2(1, [1,4], \{2,3\}, \{2,3\})$ contains all trees whose splitting features at the root node are $1$; splitting thresholds at the root node are in $\{ 1.5, ~ 2.5, ~3.5,~ 4.5 \}$; splitting features at the left child and the right child are in $\{2,3\}$. 

\subsection{An example of a single iteration of \texttt{Quant-BnB}}

We follow the assumption on data given in the previous section. Now suppose at some iteration $k$, the set $\AL^{(k-1)}$ contains a single tuple $(1, [1,4], \{2,3\}, \{2,3\})$. The current upper bound $U$ equals to $2$, and the parameter $s$ in Algorithm \ref{alg:bb-depth2} is set to be $2$. In addition, we choose $W_0(\cI, \phi)$ defined in Eq.\eqref{def:W0} as the lower bound required in Proposition \ref{prop: quantile-prune}.

To construct $\AL^{(k)}$, \alg~checks the tuple $(1, [1,4], \{2,3\}, \{2,3\})$. Since $s=2\le 4-1$, the algorithm computes $(t_0,t_1,t_2)=(1,2,4)$ being almost $2$-equi-spaced in $[1,4]$ and conducts the following 2 steps.

\begin{itemize}
    \item (Step1: Update upper bound) \texttt{Quant-BnB} computes $U'$ by
    \begin{align*}
     U'   &= \min_{f_1\in \{2,3\}, f_2\in \{2,3\}} \big\{
        V_2([6], 1, [1,4], f_1, f_2) \big\}   \\
&=\min_{f_1\in \{2,3\}, f_2\in \{2,3\}}\Bigg\{\min \Big\{L_1(\{1\},f_1)+L_1(\{2,3,4,5,6\},f_2), \\ 
&\qquad\qquad\qquad\qquad\qquad\quad L_1(\{1,2\},f_1)+L_1(\{3,4,5,6\},f_2),\,\,\,L_1(\{1,2,3,4,5\},f_1)+L_1(\{6\},f_2)\Big\} \Bigg\} \\ 
&=1.
    \end{align*}
    Since $U' < U$, \texttt{Quant-BnB} then updates $U=U'=1$.
    
    \item (Step2: Compute lower bound and prune) \texttt{Quant-BnB} computes $W_0([n],\phi^j_{f_1,f_2})$ for any $j\in \{1,2\}$, $f_1,\,f_2\in \{2,3\}$. The results are (computation details are omitted for simplicity)
    \begin{align*}
  W_0([n],\phi^1_{2,2}) = 1, ~ W_0([n],\phi^1_{2,3}) = 2  , ~W_0([n],\phi^1_{3,2}) = 1, ~ W_0([n],\phi^1_{3,3}) = 2,\\
  W_0([n],\phi^2_{2,2}) = 0, ~ W_0([n],\phi^2_{2,3}) = 1  , ~W_0([n],\phi^2_{3,2}) = 0, ~ W_0([n],\phi^2_{3,3}) = 1.
    \end{align*}
   The algorithm then computes sets $F_{1,j}$ and $F_{2,j}$ according to \eqref{eq:def-F1j} and \eqref{eq:def-F2j} 
   as
\begin{align*}
    F_{1,1} &:= \big\{f_1\in F_1 ~\big|~ \min_{f_2\in F_2} W([n],\phi^1_{f_1,f_2}) \le U  \big\}=\{2,3\},  \\
     F_{1,2} &:= \big\{f_1\in F_1 ~\big|~ \min_{f_2\in F_2} W([n],\phi^2_{f_1,f_2}) \le U  \big\}=\{2,3\},   \\
F_{2,1} &:= \big\{f_2\in F_2 ~\big|~ \min_{f_1\in F_1} W([n],\phi^1_{f_1,f_2}) \le U  \big\}=\{2\},  \\
F_{2,2} &:= \big\{f_2\in F_2 ~\big|~ \min_{f_1\in F_1} W([n],\phi^2_{f_1,f_2}) \le U  \big\}=\{2,3\}.
\end{align*}
    Finally, \texttt{Quant-BnB} updates 
    \begin{equation}
        \AL^{(k)} =   \{(1, [1, 2], F_{1,1}, F_{2,1})\} \cup \{(1, [2, 4], F_{1,2}, F_{2,2})\}. \nonumber
    \end{equation}
    
\end{itemize}

\section{Appendix for proofs}

\subsection{Auxiliary results}

We first prove a basic equality for $L_0$. 

\begin{lemma}\label{lemma:var}
For any disjoint sets $\cI,\cJ \subseteq [n]$, it holds 
\begin{equation}
   L_0(\cI \cup \cJ) \ge
L_0(\cI)+L_0(\cJ).
\end{equation}
\end{lemma}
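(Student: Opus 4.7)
The plan is to use the fact that a single constant predictor is unlikely to be simultaneously optimal on two disjoint groups, so forcing one predictor value across $\cI \cup \cJ$ can only increase (or match) the combined loss obtained by optimizing separately on each piece.

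Concretely, I would let $y^\star \in \argmin_{y \in \cY} \sum_{i \in \cI \cup \cJ} \ell(y_i, y)$, so that by definition $L_0(\cI \cup \cJ) = \sum_{i \in \cI \cup \cJ} \ell(y_i, y^\star)$. Since $\cI$ and $\cJ$ are disjoint, I can split the sum as
\begin{equation*}
L_0(\cI \cup \cJ) \;=\; \sum_{i \in \cI} \ell(y_i, y^\star) \;+\; \sum_{i \in \cJ} \ell(y_i, y^\star).
\end{equation*}
Now $y^\star$ is a feasible (but not necessarily optimal) choice in each of the two univariate minimization problems defining $L_0(\cI)$ and $L_0(\cJ)$, so each of the two sums is lower bounded by $L_0(\cI)$ and $L_0(\cJ)$ respectively, which yields the claim.

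The only place disjointness is needed is to guarantee the clean split of the sum; without it one would have a subtracted term $\sum_{i \in \cI \cap \cJ} \ell(y_i, y^\star)$ and the inequality would go the wrong way. There is no real obstacle here beyond being careful that the argument works for both example losses \eqref{L0-reg} and \eqref{L0-cls}, but since the proof only invokes the definition of $L_0$ as a minimum, no properties of $\ell$ (not even nonnegativity) are needed.
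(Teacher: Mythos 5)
Your proof is correct and is essentially the same argument the paper gives: the paper writes $L_0(\cI\cup\cJ)=\min_{y}\{\sum_{i\in\cI}\ell(y_i,y)+\sum_{j\in\cJ}\ell(y_j,y)\}$ and bounds the minimum of the sum by the sum of the minima, which is exactly your step of plugging the common minimizer $y^\star$ into each subproblem as a feasible point. The only cosmetic difference is that you instantiate an explicit $y^\star\in\argmin$ (implicitly assuming the minimum is attained, which holds for both \eqref{L0-reg} and \eqref{L0-cls}), whereas the paper works with the $\min$ directly.
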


\begin{proof}
Note that 
\begin{equation}
\begin{aligned}
 L_0(\cI \cup \cJ) & \ = \ \min_{y\in \cY} \sum_{i\in \cI \cup \cJ} \ell(y_i, y) \ = \ 
 \min_{y\in \cY} \Big\{  \sum_{i\in \cI} \ell(y_i, y) + \sum_{j\in \cJ} \ell(y_j, y)  \Big\} \\
 & \ \ge \ \min_{y\in \cY} \Big\{ \sum_{i\in \cI} \ell(y_i, y)\Big\} + \min_{y\in \cY} \Big\{ \sum_{j\in \cJ} \ell(y_j, y)\Big\}  \ = \ 
 L_0(\cI) + L_0(\cJ).
\end{aligned} 
\end{equation}

\end{proof}

Using the equality above, we prove a useful inequality for $L_1$ presented below.

\begin{lemma}\label{lemma:L1-super-add}
For any disjoint sets $\cI, \cJ \subseteq [n]$ and any $f\in [p]$, it holds 
\begin{equation}
    L_1(\cI\cup \cJ, f) \ge L_1(\cI , f) + L_1(\cJ , f).
\end{equation}
\end{lemma}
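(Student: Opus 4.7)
The plan is to reduce this to an application of Lemma \ref{lemma:var} at a fixed threshold, then use the definition of $L_1$ as a minimum over thresholds to split the bound into an $\cI$-part and a $\cJ$-part.

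First, I would pick an optimal threshold $t^\star \in \{0,1,\dots,u(f)\}$ achieving the minimum in the definition of $L_1(\cI \cup \cJ, f)$, so that
\[
L_1(\cI \cup \cJ, f) \;=\; L_0\bigl((\cI \cup \cJ)^f_{[0,t^\star]}\bigr) + L_0\bigl((\cI \cup \cJ)^f_{[t^\star,u(f)]}\bigr).
\]
The key structural observation is that because the indexing operation $\cdot^f_{[a,b]}$ just filters by the feature value, it distributes over disjoint unions: $(\cI \cup \cJ)^f_{[0,t^\star]} = \cI^f_{[0,t^\star]} \cup \cJ^f_{[0,t^\star]}$, and this union is disjoint because $\cI$ and $\cJ$ are disjoint. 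The same holds for the right interval.

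Next I would apply Lemma \ref{lemma:var} separately to the two sides, obtaining
\[
L_0\bigl((\cI \cup \cJ)^f_{[0,t^\star]}\bigr) \;\ge\; L_0\bigl(\cI^f_{[0,t^\star]}\bigr) + L_0\bigl(\cJ^f_{[0,t^\star]}\bigr),
\]
and analogously for $[t^\star, u(f)]$. Summing these two inequalities and regrouping the four terms according to which of $\cI$ or $\cJ$ they come from yields
\[
L_1(\cI \cup \cJ, f) \;\ge\; \bigl[L_0(\cI^f_{[0,t^\star]}) + L_0(\cI^f_{[t^\star,u(f)]})\bigr] + \bigl[L_0(\cJ^f_{[0,t^\star]}) + L_0(\cJ^f_{[t^\star,u(f)]})\bigr].
\]
Finally, each bracketed quantity is a specific value (at threshold $t^\star$) of the expression whose minimum over $t$ defines $L_1(\cI, f)$ and $L_1(\cJ, f)$ respectively, so each bracket is $\ge L_1(\cI,f)$ and $\ge L_1(\cJ,f)$, giving the desired inequality.

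There is no real obstacle here: the only point worth being careful about is that both $\cI$ and $\cJ$ get to use the \emph{same} threshold $t^\star$ in the lower bound, and then we pass to the minimum over thresholds independently for each. This is valid precisely because $L_1(\cI,f)$ is a minimum, so any fixed threshold upper-bounds $L_1(\cI,f)$ from above in the split-loss expression, i.e., gives a value that is $\ge L_1(\cI,f)$. The proof should fit in a few lines.
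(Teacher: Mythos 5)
Your proposal is correct and follows essentially the same route as the paper's own proof: fix the optimal threshold $t^\star$ for $\cI \cup \cJ$, split the two filtered index sets as disjoint unions, apply Lemma \ref{lemma:var} to each, regroup, and then pass to the minima defining $L_1(\cI,f)$ and $L_1(\cJ,f)$. No gaps.
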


\begin{proof}
By the definition \ref{eq:defL1}, there exists integer $t^*$ with $0 \le t^* \le u(f)$ such that 
\begin{equation}\label{ineq1}
    L_1(\cI \cup \cJ, f) = 
    L_0 \Big( (\cI\cup \cJ)_{[0,t^*]}^f  \Big) + 
    L_0 \Big( (\cI\cup \cJ)_{[t^*, u(f)]}^f  \Big).
\end{equation}
Note that $(\cI\cup \cJ)_{[0,t^*]}^f = (\cI)_{[0,t^*]}^f \cup ( \cJ)_{[0,t^*]}^f$ and $(\cI)_{[0,t^*]}^f \cap ( \cJ)_{[0,t^*]}^f = \emptyset$, so by Lemma \ref{lemma:var}, we have 
\begin{equation}\label{ineq2}
    L_0 \Big( (\cI\cup \cJ)_{[0,t^*]}^f  \Big) \ge L_0(\cI_{[0,t^*]}^f  )
    + L_0(\cJ_{[0,t^*]}^f  ).
\end{equation}
By a similar argument we have
\begin{equation}\label{ineq3}
    L_0 \Big( (\cI\cup \cJ)_{[t^*, u(f)]}^f  \Big) \ge L_0(\cI_{[t^*, u(f)]}^f  )
    + L_0(\cJ_{[t^*,u(f)]}^f  ).
\end{equation}
By \eqref{ineq1}, \eqref{ineq2} and \eqref{ineq3} we have
\begin{equation}
\begin{aligned}
L_1(\cI \cup \cJ, f) &~\ge~ 
L_0(\cI_{[0,t^*]}^f  )
     +L_0(\cI_{[t^*, u(f)]}^f  )
    + L_0(\cJ_{[0,t^*]}^f  )
    + L_0(\cJ_{[t^*,u(f)]}^f  ) \\
    &~\ge~
    \min_{0\le t \le u(f)} \big\{
    L_0(\cI_{[0,t]}^f  )
     +L_0(\cI_{[t, u(f)]}^f  )
    \big\} + 
    \min_{0\le t \le u(f)} \big\{
    L_0(\cJ_{[0,t]}^f  )
    + L_0(\cJ_{[t,u(f)]}^f  )
    \big\} \\
    &~=~ L_1(\cI, f) + L_1(\cJ, f) .
\end{aligned}
\nonumber
\end{equation}
This completes the proof.
\end{proof}

A similar inequality also holds for $L_2$, as shown below. 

\begin{lemma}\label{lemma:L2-super-add}
For any disjoint sets $\cI, \cJ \subseteq [n]$ and any $\phi\in \Phi$, it holds 
\begin{equation}
    L_2(\cI\cup \cJ, \phi) \ge L_2(\cI , \phi) + L_2(\cJ , \phi). \nonumber
\end{equation}
\end{lemma}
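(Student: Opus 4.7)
The strategy mirrors the proof of Lemma \ref{lemma:L1-super-add}: use the definition of $L_2$ as a minimum over threshold choices $t$, pick the minimizer for $\cI\cup\cJ$, and then apply the already-established super-additivity of $L_1$ to each of the two resulting terms.

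First, unpack $\phi = (f_0, [a,b], f_1, f_2)$ and pick the minimizer $t^* \in [a,b]$ so that
\begin{equation*}
L_2(\cI\cup\cJ, \phi) = L_1\bigl((\cI\cup\cJ)^{f_0}_{[0,t^*]}, f_1\bigr) + L_1\bigl((\cI\cup\cJ)^{f_0}_{[t^*,u(f_0)]}, f_2\bigr).
\end{equation*}
Next, observe that since $\cI\cap\cJ = \emptyset$, each half splits as a disjoint union: $(\cI\cup\cJ)^{f_0}_{[0,t^*]} = \cI^{f_0}_{[0,t^*]} \sqcup \cJ^{f_0}_{[0,t^*]}$, and similarly for the right half. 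Apply Lemma \ref{lemma:L1-super-add} to each $L_1$ term to get
\begin{equation*}
L_2(\cI\cup\cJ, \phi) \ge \underbrace{L_1(\cI^{f_0}_{[0,t^*]}, f_1) + L_1(\cI^{f_0}_{[t^*,u(f_0)]}, f_2)}_{\ge\, L_2(\cI, \phi)} + \underbrace{L_1(\cJ^{f_0}_{[0,t^*]}, f_1) + L_1(\cJ^{f_0}_{[t^*,u(f_0)]}, f_2)}_{\ge\, L_2(\cJ, \phi)}.
\end{equation*}
Finally, since $t^* \in [a,b]$, each bracketed expression is a feasible value in the min defining $L_2(\cI,\phi)$ and $L_2(\cJ,\phi)$ respectively, so each is bounded below by the corresponding minimum. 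Summing the two gives the claim.

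No real obstacle is expected: the result is a near-immediate corollary of Lemma \ref{lemma:L1-super-add} together with the ``min-of-sums dominates sum-of-mins'' inequality (used via a shared minimizer $t^*$). The only point requiring care is that $t^*$ — chosen to minimize the $L_2$ expression for $\cI\cup\cJ$ — lies in $[a,b]$ and hence is admissible when bounding $L_2(\cI,\phi)$ and $L_2(\cJ,\phi)$ individually, which is immediate from the definition of $L_2$.
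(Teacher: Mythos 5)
Your proof is correct and follows essentially the same route as the paper's: fix the minimizing threshold $t^*$ for $\cI\cup\cJ$, split each half as a disjoint union, apply the super-additivity of $L_1$ (Lemma \ref{lemma:L1-super-add}) termwise, and bound below by the respective minima since $t^*\in[a,b]$ is feasible. No differences worth noting.
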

\begin{proof}
Given $\phi = (f_0, [a,b], f_1, f_2) \in \Phi$, 
from the equality \eqref{def:L2}, there exists an integer $t^* \in [a,b]$ such that 
\begin{equation}\label{ineq-new1}
    L_2 (\cI\cup \cJ, \phi) = 
 L_1((\cI\cup\cJ)^{f_0}_{[0,t^*]},f_1) +  L_1((\cI\cup\cJ)^{f_0}_{[t^*,u({f_0})]},f_2).
\end{equation}
Note that $(\cI\cup \cJ)_{[0,t^*]}^{f_0} = (\cI)_{[0,t^*]}^{f_0} \cup ( \cJ)_{[0,t^*]}^{f_0}$ and $(\cI)_{[0,t^*]}^{f_0} \cap ( \cJ)_{[0,t^*]}^{f_0} = \emptyset$, so by Lemma \ref{lemma:L1-super-add}, we have 
\begin{equation}\label{ineq-new2}
    L_1 \Big( (\cI\cup \cJ)_{[0,t^*]}^{f_0}, f_1  \Big) \ge L_1(\cI_{[0,t^*]}^{f_0} , f_1 )
    + L_1(\cJ_{[0,t^*]}^{f_0}, f_1  ).
\end{equation}
By a similar argument we have
\begin{equation}\label{ineq-new3}
    L_1 \Big( (\cI\cup \cJ)_{[t^*, u(f_0)]}^{f_0}, f_2  \Big) \ge L_1(\cI_{[t^*, u(f_0)]}^{f_0} , f_2 )
    + L_1(\cJ_{[t^*,u(f_0)]}^{f_0}, f_2  ).
\end{equation}
Combining \eqref{ineq-new1}, \eqref{ineq-new2} and \eqref{ineq-new3}, we have 
\begin{equation}
\begin{aligned}
L_2(\cI \cup \cJ, \phi) &~\ge~ 
L_1(\cI_{[0,t^*]}^{f_0}, f_1  )
     +L_1(\cI_{[t^*, u(f_0)]}^{f_0}, f_2  )
    + L_1(\cJ_{[0,t^*]}^{f_0}, f_1  )
    + L_1(\cJ_{[t^*,u(f_0)]}^{f_0}, f_2  ) \\
    &~\ge~
    \min_{0\le t \le u(f_0)} \big\{
    L_1(\cI_{[0,t]}^{f_0}, f_1  )
     +L_1(\cI_{[t, u(f_0)]}^{f_0}, f_2  )
    \big\} + 
    \min_{0\le t \le u(f_0)} \big\{
    L_1(\cJ_{[0,t]}^{f_0} ,f_1 )
    + L_1(\cJ_{[t,u(f_0)]}^{f_0} , f_2 )
    \big\} \\
    &~=~ L_2(\cI, \phi) + L_2(\cJ, \phi).
\end{aligned}
\nonumber
\end{equation}
This completes the proof.

\end{proof}

\subsection{Proof of Lemma \ref{lemma:L2-lb}}
\begin{proof}
By definition \eqref{def:L2} we have \begin{equation}
L_2(\cI, f_0, [a,b], f_1, f_2) = 
\min_{a \le t \le b} \big\{  L_1(\cI^{f_0}_{[0,t]},f_1) +  L_1(\cI^{f_0}_{[t,u({f_0})]},f_2) \big\} .\nonumber
\end{equation}
Let $t^* $ be the integer in $[a,b]$ such that 
\begin{equation}\label{tmp1}
    L_2(\cI, f_0, [a,b], f_1, f_2) = 
    L_1(\cI^{f_0}_{[0,t^*]},f_1) +  L_1(\cI^{f_0}_{[t^*,u({f_0})]},f_2).
\end{equation}
Since there exists $j^*\in [s']$ such that $t_{j^*-1} \le t^* \le t_{j^*} $, by Lemma \ref{lemma:L1-super-add} we have
\begin{equation}\label{tmp2}
L_1(\cI^{f_0}_{[0,t^*]},f_1) \ge 
L_1(\cI^{f_0}_{[0,t_{j^*-1}]},f_1), \quad \text{and}\quad 
L_1(\cI^{f_0}_{[t^*,u({f_0})]},f_2) \ge 
L_1(\cI^{f_0}_{[t_{j^*}, u(f_0)]},f_2). 
\end{equation}
Combining \eqref{tmp1} and \eqref{tmp2} we have
\begin{equation}
\begin{aligned}
L_2(\cI, f_0, [a,b], f_1, f_2) &\ge 
L_1(\cI^{f_0}_{[0,t_{j^*-1}]},f_1)+
L_1(\cI^{f_0}_{[t_{j^*}, u(f_0)]},f_2) \\
&\ge
\min_{j\in[s']} \big\{L_1(\cI^{f_0}_{[0,t_{j-1}]},f_1)+
L_1(\cI^{f_0}_{[t_{j}, u(f_0)]},f_2) \big\} \\
&= \widehat L_2 (\cI, f_0, [a,b], f_1, f_2, s').
\end{aligned}
\nonumber
\end{equation}
\end{proof}

\subsection{Proof of Lemma \ref{lemma:key-ineq}}

\begin{proof}
By definition it is trivial that $W_0(\cI, \phi) \le W_{1,s'} (\cI, \phi)$ for any $s'\le b-a$; 
By Lemma \ref{lemma:L2-lb}, we know that $W_{1,s'} (\cI, \phi) \le W_2(\cI, \phi)$. In the following, we prove that 
$ W_2(\cI, \phi) \le L_2(\cI,  \phi)$. 
Suppose $\phi = (f_0, [a,b], f_1, f_2)$ with $f_0,f_1,f_2 \in [p]$ and $0\le a \le b \le u(f_0)$. 
Note that
\begin{equation}
    \begin{aligned}
    L_2(\cI,  \phi) 
    &= \min_{t\in [a,b]} 
    \Big\{ L_1( \cI_{[0,t]}^{f_0} , f_1 ) 
    + L_1(\cI_{[t,u(f_0)]}^{f_0}, f_2 )\Big\} \\
    &\ge 
    \min_{t\in [a,b]} 
    \Big\{ L_1( \cI_{[0,a]}^{f_0} , f_1 ) +
    L_1( \cI_{[a,t]}^{f_0} , f_1 ) +
    L_1(\cI_{[t,b]}^{f_0}, f_2 ) + 
    L_1(\cI_{[b,u(f_0)]}^{f_0}, f_2 )
    \Big\} \\
    &= 
    L_1( \cI_{[0,a]}^{f_0} , f_1 ) +
    L_1( \cI_{[b,u(f_0)]}^{f_0} , f_2 )
    + 
    L_2( \cI^{f_0}_{[a,b]}, f_0, [a,b], f_1,f_2 ) ~ = ~ W_2(\cI, \phi),
    \end{aligned}
    \nonumber
\end{equation}
where the inequality follows from Lemma \ref{lemma:L1-super-add}. 
\end{proof}

\subsection{Proof of Proposition \ref{prop: quantile-prune}}
\begin{proof}
Note that 
\begin{equation}
    \cT_2(f_0, [a,b], F_1, F_2) = \bigcup_{j=1}^s \cT_2(f_0, [t_{j-1}, t_j], F_{1}, F_{2}).
\end{equation}
So we have 
\begin{equation}
   \cT_2(f_0, [a,b], F_1, F_2) \setminus \bigcup_{j=1}^s \cT_2(f_0, [t_{j-1}, t_j], F_{1,j}, F_{2,j}) = 
   \bigcup_{j=1}^s \Big( \cT_2(f_0, [t_{j-1},t_j], F_1, F_2) \setminus \cT_2(f_0, [t_{j-1}, t_j], F_{1,j}, F_{2,j}) \Big). \nonumber
\end{equation}
Suppose (for contradiction) that an optimal solution $T^*$ is in the l.h.s. of the above set, then there exists $j\in [s]$ such that 
\begin{equation}
T^* \in \cT_2(f_0, [t_{j-1},t_j], F_1, F_2) \setminus \cT_2(f_0, [t_{j-1}, t_j], F_{1,j}, F_{2,j}).
\end{equation}
Then we know $f_{\rootsub}(T^*) = f_0$, $f_{\lsub}(T^*) \in F_1$, $f_{\rsub}(T^*) \in F_2$,
and at least one of the following two cases hold: 
\begin{equation}
    (i) ~ f_{\lsub}(T^*) \in F_1 \setminus F_{1,j};
    \qquad 
    (ii) ~ f_{\rsub}(T^*) \in F_2 \setminus F_{2,j} .
\end{equation}
If $(i)$ holds, then we have 
\begin{equation}
    \begin{aligned}
     L_2([n], f_0, [t_{j-1}, t_j], f_{\lsub}(T^*), f_{\rsub}(T^*))
    &~\ge~ 
    W ([n], f_0, [t_{j-1}, t_j], f_{\lsub}(T^*), f_{\rsub}(T^*)) 
    \\
    &~\ge~
    \min_{f_2\in F_2} \big\{
    W ( [n], f_0, [t_{j-1}, t_j], f_{\lsub}(T^*), f_2) \big\} ~>~ U,
    \end{aligned}
\end{equation}
where the first inequality is by the assumption that $W(\cI,\phi) \le L_2(\cI, \phi)$ for all $\phi \in \Phi$; the last inequality is by the definition of $F_{1,j}$. 
Note that $L_2([n], f_0, [t_{j-1}, t_j], f_{\lsub}(T^*), f_{\rsub}(T^*))$ is the optimal value of \eqref{eq:opt}, this is a contradiction to the assumption that $U$ is an upper bound of the optimal value of \eqref{eq:opt}. If (ii) holds, by a similar argument as shown above we have a contradiction. 
 
\end{proof}

\subsection{Proof of Theorem \ref{thm:convergence}}
\begin{proof}
Define 
\begin{equation*}
    C_k:= \max_{(f_0,[a,b],F_1,F_2) \in \AL^{(k)}} \left\{b-a\right\}
\end{equation*}
as the length of the longest interval over all tuples in $\AL^{(k)}$. As stated in Section \ref{sect:overall strategy}, in each iteration Algorithm \ref{alg:bb-depth2} splits space $\cT_2( f_0, [a,b], F_1, F_2 )$ into at most $s$ subsets \begin{equation*}
    \bigcup_{j\in[s]}\cT_2( f_0, [t_{j-1},t_{j}],  F_{1,j},  F_{2,j} ).
\end{equation*}
Using the definition of $s$-equi-spaced points in $[a,b]$, and note that for any real numbers $s_1,s_2$ it holds $\lfl s_1 \rfl - \lfl s_2 \rfl \le \lceil s_1-s_2 \rceil$, we have
\begin{equation*}
    t_{j}-t_{j-1}\le  \left\lceil a+(j/s)(b-a)- a-((j-1)/s)(b-a)\right\rceil  =\left\lceil\frac{b-a}{s}\right\rceil\quad \text{ for all } j\in [s].
\end{equation*}
Namely, the length of the interval $[a,b]$ in the tuple $(f_0, [a,b], F_1, F_2 )$ reduces to $1/s$ of its original value in every iteration. Therefore, $C_k\le \left\lceil\frac{C_{k-1}}{s}\right\rceil$ for any $k\ge 1$.
Applying this inequality recursively, it holds that for any positive integers $k_0$, $m$, $C_{k_0}\le m$ as long as $C_0\le ms^{k_0}$. Setting $k_0=\lceil log_{s}(n)\rceil-1$ and $m =  \lceil\frac{C_0}{s^{k_0}}\rceil$ yields
\begin{equation*}
    C_{k_0} \le \Big\lceil\frac{C_0}{s^{k_0}}\Big\rceil\le\Big\lceil\frac{n}{s^{k_0}}\Big\rceil\le s.
\end{equation*}
If $\AL^{(k_0)}$ is not empty, then any tuple $(f_0, [a,b], F_1, F_2 )$ in it satisfies $b-a\le s$. The algorithm will then perform the exhaustive search method to each tuple in $\AL^{(k_0)}$ in the iteration $k_0+1$. Hence, $\AL^{(k_0+1)}=\emptyset$, and the algorithm terminates in at most $k_0+1=\lceil log_{s}(n)\rceil$ iterations.

Now, we prove that the algorithm yields an optimal solution. Suppose (by contradiction) that for some $k_0$, $\AL^{(k_0)}$ becomes empty and the algorithm offers an sub-optimal solution. This indicates that in some iteration $k\le k_0$, (at least one of if there exist multiple optimal solutions) the optimal solution is discarded during Step2, otherwise the algorithm would examine the loss of the optimal solution and record it since it is optimal. This contradicts to Proposition \ref{prop: quantile-prune}, which guarantees that no optimal solution will be eliminated in Step2. Therefore, the algorithm will give the optimal solution. The proof is completed.

\end{proof}

\subsection{Proof of Lemma \ref{lemma: compare-cost}}

\begin{proof}
Note that for all these three choices of $W$, Step 1 is the same. 

For the cost of Step 1, note that 
\begin{equation}
\begin{aligned}
U' &~=~  
    \min_{f_1\in F_1, f_2\in F_2, 0\le j\le s} \big\{
        L_1([n]_{[0,t_j]}^{f_0}, f_1) + L_1( [n]_{[t_j, u(f_0)]}^{f_0}, f_2 ) 
        \big\} \\
    &~=~
    \min_{0\le j\le s} \Big\{ 
    \min_{f_1\in F_1} \{ L_1([n]_{[0,t_j]}^{f_0}, f_1) \} + 
    \min_{f_2\in F_2} \{
    L_1( [n]_{[t_j, u(f_0)]}^{f_0}, f_2 ) 
    \}
    \Big\}.
\end{aligned}
\end{equation}
Recall that $L_1 (\cI, f) $ can be computed within $\wtd O(|\cI|)$ operations, so $U'$ can be computed within $\wtd O(n\td p s)$ operations.

The major cost of Step 2 lies in the computation of 
\begin{equation}\label{compute-W-1}
    \min_{f_2\in F_2} W([n],\phi^j_{f_1,f_2}) = \min_{f_2\in F_2} \big\{ W([n], f_0, [t_{j-1}, t_j], f_1, f_2)   \big\} \quad \text{ for all } f_1\in F_1 \text{ and } j\in [s], 
\end{equation}
and 
\begin{equation}\label{compute-W-2}
    \min_{f_1\in F_1} W([n],\phi^j_{f_1,f_2}) = \min_{f_1\in F_1} \big\{ W([n], f_0, [t_{j-1}, t_j], f_1, f_2)   \big\} 
    \quad \text{ for all } f_2\in F_2 \text{ and } j\in [s]. 
\end{equation}
Once \eqref{compute-W-1} and \eqref{compute-W-2} have been computed, the remaining cost of Step 2 can be bounded by $ O(\td p s)$. Below we show the costs of \eqref{compute-W-1} and \eqref{compute-W-2}, under different choices of $W$. 

(1) If $W = W_0$, we have 
\begin{equation}
    \begin{aligned}
    \min_{f_2\in F_2} \big\{ W([n], f_0, [t_{j-1}, t_j], f_1, f_2)   \big\}
    &= 
    \min_{f_2\in F_2} \Big\{   L_1([n]_{[0,t_{j-1}]}^{f_0}, f_1)  +
    L_1( [n]_{[t_j,n]}^{f_0} , f_2  ) 
    \Big\} \\
    &= 
    L_1 ( [n]_{[0,t_{j-1}]}^{f_0}, f_1 ) + 
    \min_{f_2\in F_2} \big\{  
    L_1( [n]_{[t_j,n]}^{f_0} , f_2  ) 
    \big\},
    \end{aligned}
\end{equation}
where the first equality makes use of the definition of $W_0$ and the assumption that $u(f_0) = n$. By the expression above, we know that computing \eqref{compute-W-1} requires at most $\wtd O(n\td ps)$ operations. 
By a similar argument, computing \eqref{compute-W-2} also requires at most $\wtd O(n\td ps)$ operations. Hence the cost of Step 2 is bounded by $\wtd O(n\td ps)$. 

(2) If $W = W_{1,s'}$ for some integer $s' \le b-a$, we have
\begin{equation}
    \begin{aligned}
    &\quad\min_{f_2\in F_2} \big\{ W([n], f_0, [t_{j-1}, t_j], f_1, f_2)   \big\} \\
    &= \min_{f_2 \in F_2} \Big\{ L_1([n]^{f_0}_{[0, t_{j-1}]}, f_1) 
+ L_1([n]^{f_0}_{[ t_j,n]}, f_2)
+ \widehat L_2([n]^{f_0}_{[ t_{j-1},t_j]}, f_0, [t_{j-1}, t_j], f_1,f_2, s') \Big\},
    \end{aligned}
\end{equation}
where we have used the assumption that $u(f_0) = n$. 
Let $(r_0, ...., r_{s'}) $ be the almost $s'$-equi-spaced integers in $[t_{j-1}, t_j]$, then by the definition of $\widehat L_2$ we have 
\begin{eqnarray}
&& \qquad\min_{f_2\in F_2} \big\{ W([n], f_0, [t_{j-1}, t_j], f_1, f_2)   \big\} \nonumber\\
&=&\min_{  f_2\in F_2, k\in [s'] } \Big\{  L_1( [n]_{[0,t_{j-1}]}^{f_0}, f_1 ) +
L_1 ( [n]^{f_0}_{[t_{j-1}, r_{k-1}]}, f_1 ) + 
L_1 ( [n]^{f_0}_{[r_{k}, t_j]}, f_2 ) + 
L_1 ( [n]^{f_0}_{[t_{j}, n]}, f_2 )\Big\}
\nonumber\\
&=& 
L_1( [n]_{[0,t_{j-1}]}^{f_0}, f_1 ) ~+~
\min_{  k\in [s'] } \Big\{   
L_1 ( [n]^{f_0}_{[t_{j-1}, r_{k-1}]}, f_1 ) 
+ 
\min_{f_2\in F_2} \big\{
L_1 ( [n]^{f_0}_{[r_{k}, t_j]}, f_2 ) + 
L_1 ( [n]^{f_0}_{[t_{j}, n]}, f_2 )
\big\}
\Big\} \nonumber\\
&:=&
J_1(j,f_1) ~+~
\min_{  k\in [s'] } \Big\{   
J_3(j, f_1, k)
+ 
\min_{f_2\in F_2} \big\{
J_4(j, f_2, k) + 
J_2(j, f_2)
\big\}
\Big\}. \label{eqq1}
\end{eqnarray}
where $J_1(j, f_1):= L_1( [n]_{[0,t_{j-1}]}^{f_0}, f_1 )$; 
$J_2(j, f_2):= L_1 ( [n]^{f_0}_{[t_{j}, n]}, f_2 )$; 
$J_3(j, f_1, k) :=  L_1 ( [n]^{f_0}_{[t_{j-1}, r_{k-1}]}, f_1 ) $
and 
$ J_4(j, f_2, k):= L_1 ( [n]^{f_0}_{[r_{k}, t_j]}, f_2 )$, and we highlighted the dependence on $j, f_1, f_2, k$. 
Note that:
\begin{itemize}
    \item $\{J_1(j, f_1) \}_{j\in [s], f_1\in F_1}$ can be computed with $\wtd O(n\td p s)$ operations. 
    \item $\{J_2(j, f_2)\}_{j\in [s], f_2\in F_2}$ can be computed with $\wtd O(n\td p s)$ operations. 
    \item $\{ J_3(j, f_1,k) \}_{j\in [s], f_1\in F_1, k\in [s']}$ can be computed with $\wtd O(\td p s' (t_j-t_{j-1})s) = \wtd O(\td p s' {(b-a)})$ operations. 
    \item $\{ J_4(j, f_2, k) \}_{j\in [s], f_2\in F_2, k\in [s']}$ can be computed with $\wtd O(\td p s' (t_j-t_{j-1})s) = \wtd O(\td p s' {(b-a)})$ operations. 
\end{itemize}
After the values above have been computed and maintained in memory, 
\begin{itemize}
    \item $ \Big\{\min_{f_2\in F_2} \big\{
J_4(j, f_2, k) + 
J_2(j, f_2)
\big\}
\Big\}_{j\in [s], k\in [s']}$ can be computed with $O(\td p ss')$ operations. 
\end{itemize}
Based on this, we know 
\begin{itemize}
    \item Computing 
    $\min_{  k\in [s'] } \Big\{   
J_3(j, f_1, k)
+ 
\min_{f_2\in F_2} \big\{
J_4(j, f_2, k) + 
J_2(j, f_2)
\big\}
\Big\} $ for all $j\in [s]$ and $f_1\in F_1$ requires at most $O(\td p s s')$ operations. 
\end{itemize}
With the analysis above, the computation of \eqref{compute-W-1}
requires at most $ \wtd O(n\td p s + \td p s' {(b-a)})$ operations. By a similar analysis, the computation of \eqref{compute-W-2} also
requires at most $ \wtd O(n\td p s + \td p s' {(b-a)})$ operations. 

(3) If $W = W_2$, the analysis is the same as the analysis for $W = W_{1,s'}$, with $s'$ being of the same order as $ (b-a)/s$.

(4) If $W = L_2$, it holds
\begin{equation}
    \begin{aligned}
     \min_{f\in F_2} \ W([n], f_0, [t_{j-1},t_j], f_1, f_2) 
    & ~=~ \min_{f\in F_2} \ 
    L_2([n], f_0, [t_{j-1},t_j], f_1, f_2) \\
    &~=~ 
    \min_{f_2\in F_2, t_{j-1} \le t \le t_j} 
    \Big\{ 
    L_1 ( [n]_{[0,t]}^{f_0}, f_1) + L_1([n]_{[t,n]}^{f_0}, f_2) 
    \Big\}.
    \end{aligned}
\end{equation}
For a fixed $j\in [s]$, computing the expression above requires $\wtd O((t_j - t_{j-1})\td p n)$ operations, so the computation of \eqref{compute-W-1} takes $\wtd O(n\td p (b-a))$ operations. By a similar analysis, the computation of \eqref{compute-W-2} also takes $\wtd O(n\td p (b-a))$ operations.

\end{proof}

\section{\texttt{Quant-BnB} for depth-3 optimal regression trees}\label{app-sect: depth-3}
In the following we present details on \texttt{Quant-BnB} for depth-3 optimal regression trees. 

\subsection{Notations and preliminaries}
Let $\cT_3$ be the set of all decision trees with depth $3$ whose splitting thresholds are in $ \{ \td w^f_{t} \}_{f\in [p], 0\le t \le u(f) }$. The problem of optimal regression tree with depth $3$ can be formulated as 
\begin{equation}\label{eq:opt-d=3}
\min_{T\in \cT_3} \sum_{i=1}^n \ell (y_i, T(x_i)).
\end{equation}
We use the notations shown in Figure \ref{fig:notation-depth3} to denote the nodes in a tree $T \in \cT_3$. 
For $S \in \{O,L,R, LL, LR, RL, RR \}$, let 
$(f_S(T), t_S(T)) $ denote the splitting rule for $ N_{S}(T)$. 

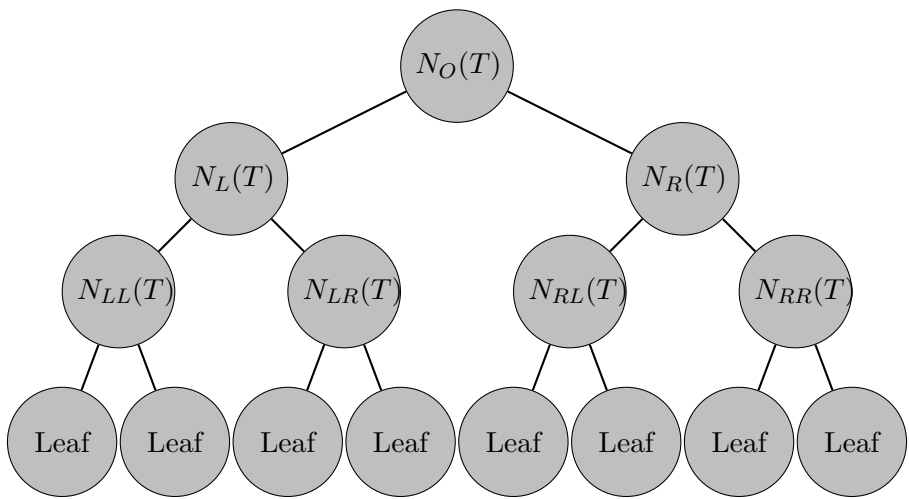
\begin{figure}[h]
\centering
\begin{tikzpicture}
   \tikzstyle{treenode} = [circle, minimum width=1.1cm, minimum height=1.1cm, text centered, text width=1.1cm, draw=black,  fill=gray!50]
   \tikzstyle{tedge} = [thick,-,>=stealth]

   \node[treenode] (n1) at (0,0.5) {$N_{\rootsub}(T)$};
   \node[treenode] (n2) at (-3,-1) {$N_{\lsub}(T)$};
   \node[treenode] (n3) at (3,-1) {$N_{\rsub}(T)$};
   \node[treenode] (n4) at (-4.5,-2.5) {$N_{\llsub}(T)$};
   \node[treenode] (n5) at (-1.5,-2.5) {$N_{\lrsub}(T)$};
   \node[treenode] (n6) at (1.5,-2.5) {$N_{\rlsub}(T)$};
   \node[treenode] (n7) at (4.5,-2.5) {$N_{\rrsub}(T)$};
   \node[treenode] (l1) at (-5.25,-4.5) {Leaf};
   \node[treenode] (l2) at (-3.75,-4.5) {Leaf};
   \node[treenode] (l3) at (-2.25,-4.5) {Leaf};
   \node[treenode] (l4) at (-0.75,-4.5) {Leaf};
   \node[treenode] (l5) at (0.75,-4.5) {Leaf};
   \node[treenode] (l6) at (2.25,-4.5) {Leaf};
   \node[treenode] (l7) at (3.75,-4.5) {Leaf};
   \node[treenode] (l8) at (5.25,-4.5) {Leaf};

   \draw[tedge] (n1) -- (n2);
   \draw[tedge] (n1) -- (n3);
   \draw[tedge] (n2) -- (n4);
   \draw[tedge] (n2) -- (n5);
   \draw[tedge] (n3) -- (n6);
   \draw[tedge] (n3) -- (n7);
   \draw[tedge] (n4) -- (l1);
   \draw[tedge] (n4) -- (l2);
   \draw[tedge] (n5) -- (l3);
   \draw[tedge] (n5) -- (l4);
   \draw[tedge] (n6) -- (l5);
   \draw[tedge] (n6) -- (l6);
   \draw[tedge] (n7) -- (l7);
   \draw[tedge] (n7) -- (l8);
\end{tikzpicture} 
\caption{Decision tree with depth $3$.}
\label{fig:notation-depth3}
\end{figure}

Given $f_0\in [p]$, integers $a$ and $b$ with $0\le a \le b \le u(f_0)$ and $\phi_1, \phi_2 \in \Phi$ with 
$\phi_1 = (f_1, [a_1,b_1],f_{1,1}, f_{1,2})$ and $\phi_2 = (f_2, [a_2,b_2],f_{2,1}, f_{2,2})$, define 
\begin{equation}
    \cT_3 (f_0, [a_0, b_0], \phi_1, \phi_2)
\end{equation}
to be the set of all trees $T \in \cT_3$ satisfying: 
$f_{\rootsub}(T) = f_0$, $t_{\rootsub} \in [a_0, b_0]$; 
~$f_{\lsub}(T) = f_1$, $t_{\lsub}(T) \in [a_1,b_1]$, $ f_{\llsub}(T) = f_{1,1} $, $f_{\lrsub}(T) = f_{1,2}$;
~ $f_{\rsub} (T) = f_{2}$, $t_{\rsub}(T) \in [a_2,b_2]$, $f_{\rlsub}(T) = f_{2,1}$ and $ f_{\rrsub}(T) = f_{2,2}$. 

For $\Phi_1 , \Phi_2 \subseteq \Phi$, define 
\begin{equation}
    \cT_3 ( f_0, [a_0, b_0], \Phi_1, \Phi_2 ) := \bigcup_{\phi_1\in \Phi_1, \phi_2 \in \Phi_2} \cT_3 ( f_0, [a_0, b_0], \phi_1, \phi_2 ).
\end{equation}
For any $\phi = (f_0,[a,b], f_1, f_2 ) \in \Phi$, and for a given positive integer $s \le b-a$, let $(t_0, ..., t_s)$ be almost equi-spaced in $[a,b]$, and define $\phi^{s,j} := (f_0, [t_{j-1}, t_j], f_1, f_2)$ for any $j\in [s]$.

\subsection{Quantile-based pruning}
In this section, we focus on a subset of trees $\cT_3 ( f_0, [a_0, b_0], \Phi_1, \Phi_2 ) $, and discuss how to replace it with a smaller search space without missing the optimal solution. 

\begin{proposition}\label{prop: pruning-d=3}
Let $W$ and $V$ be two functions on $2^{[n]} \times \Phi$ satisfying $W(\cI, \phi) \le L_2(\cI, \phi) \le V(\cI, \phi)$ for all $\cI \subseteq [n]$ and $\phi\in \Phi $. Let $U$ be an upper bound of the optimal value of \eqref{eq:opt-d=3}. Given a subset $\cT_3 ( f_0, [a_0, b_0], \Phi_1, \Phi_2 ) $ with $f_0\in [p]$, $0\le a_0 \le b_0 \le u(f_0)$ and $\Phi_1, \Phi_2 \subseteq \Phi$; let $(t_0,...,t_s)$ be almost $s$-equi-spaced in $[a_0,b_0]$. For each $j\in [s]$, define 
\begin{equation}\label{Phi_1j}
\begin{aligned}
\Phi_{1,j} &:= \Big\{ \phi_1^{s,j'} ~\Big|~ 
\phi_1 \in \Phi_1, ~ j' \in [s], ~ 
W( [n]_{[0,t_{j-1}]}^{f_0} , \phi_1^{s,j'} ) \le 
\min_{\phi\in \Phi_1} V([n]_{[0,t_{j}]}^{f_0}, \phi), ~ \\
&
\qquad \qquad \quad
W( [n]_{[0,t_{j-1}]}^{f_0}, 
\phi_1^{s,j'} ) + \min_{\phi\in\Phi_2} W([n]_{[t_j, u(f_0)]}^{f_0}, \phi) \ \le \ U \Big\}
\end{aligned}
\end{equation}
and
\begin{equation}\label{Phi_2j}
\begin{aligned}
 \Phi_{2,j} &:= \Big\{ \phi_2^{s,j''} ~\Big|~ 
\phi_2 \in \Phi_2, ~ j'' \in [s], ~ 
W( [n]_{[t_j, u(f_0)]}^{f_0} , \phi_2^{s,j''} ) \le 
\min_{\phi\in \Phi_2} V([n]_{[t_{j-1}, u(f_0)]}^{f_0}, \phi), ~ \\
&
\qquad \qquad \quad
W( [n]_{[t_j, u(f_0)]}^{f_0}, 
\phi_2^{s,j''} ) + \min_{\phi\in\Phi_1} W([n]_{[0,t_{j-1}]}^{f_0}, \phi) \ \le \ U \Big\}.
\end{aligned}
\end{equation}
Then any optimal solution of \eqref{eq:opt-d=3} is not in 
\begin{equation}\label{opt-not-in-d=3}
\cT_3 (f_0, [a_0, b_0], \Phi_1, \Phi_2) \setminus \bigcup_{j=1}^s \cT_3 ( f_0, [t_{j-1}, t_j], \Phi_{1,j}, \Phi_{2,j} ).
\end{equation}
\end{proposition}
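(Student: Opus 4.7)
The plan is to adapt the proof of Proposition~\ref{prop: quantile-prune} to depth~$3$. Suppose for contradiction that some optimal $T^*$ of~\eqref{eq:opt-d=3} lies in the set~\eqref{opt-not-in-d=3}. Since $\cT_3(f_0, [a_0, b_0], \Phi_1, \Phi_2) = \bigcup_{j=1}^s \cT_3(f_0, [t_{j-1}, t_j], \Phi_1, \Phi_2)$, $T^*$ must fall in some $\cT_3(f_0, [t_{j-1}, t_j], \Phi_1, \Phi_2) \setminus \cT_3(f_0, [t_{j-1}, t_j], \Phi_{1,j}, \Phi_{2,j})$. Write $t^* = t_{\rootsub}(T^*) \in [t_{j-1}, t_j]$ and let $T^*_L, T^*_R$ denote the depth-$2$ subtrees. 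There exist $\phi_1\in\Phi_1, \phi_2\in\Phi_2$ with $T^*_L \in \cT_2(\phi_1)$ and $T^*_R \in \cT_2(\phi_2)$; locating the root thresholds of $T^*_L, T^*_R$ inside the almost $s$-equi-spaced partitions of $\phi_1$'s and $\phi_2$'s threshold ranges would then yield refined parameters $\phi_1^{s,j'}, \phi_2^{s,j''}$ with $T^*_L \in \cT_2(\phi_1^{s,j'})$ and $T^*_R \in \cT_2(\phi_2^{s,j''})$. Since $T^* \notin \cT_3(f_0, [t_{j-1}, t_j], \Phi_{1,j}, \Phi_{2,j})$, at least one of $\phi_1^{s,j'} \notin \Phi_{1,j}$ or $\phi_2^{s,j''} \notin \Phi_{2,j}$ must hold; by symmetry I would treat only the former.

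Set $\cI_L := [n]_{[0, t^*]}^{f_0}$, $\cI_R := [n]_{[t^*, u(f_0)]}^{f_0}$ and $g(T, \cI) := \sum_{i\in \cI}\ell(y_i, T(x_i))$. The key tool is super-additivity of $L_2$ (Lemma~\ref{lemma:L2-super-add}): applied to the disjoint decomposition $\cI_L = [n]_{[0,t_{j-1}]}^{f_0} \cup [n]_{[t_{j-1}, t^*]}^{f_0}$ (disjoint because no $x_{i,f_0}$ equals the midpoint $\td w^{f_0}_{t_{j-1}}$), combined with $W \le L_2$ and non-negativity of $\ell$, it yields $g(T^*_L, \cI_L) \ge L_2(\cI_L, \phi_1^{s,j'}) \ge W([n]_{[0,t_{j-1}]}^{f_0}, \phi_1^{s,j'})$. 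If the second condition in~\eqref{Phi_1j} fails, the analogous argument on the right---using $T^*_R \in \cT_2(\phi_2)$ with $\phi_2 \in \Phi_2$---gives $g(T^*_R, \cI_R) \ge \min_{\phi \in \Phi_2}W([n]_{[t_j, u(f_0)]}^{f_0}, \phi)$. Summing the two lower bounds produces $g(T^*, [n]) > U$, contradicting the assumption that $U$ upper bounds the optimal value of~\eqref{eq:opt-d=3}.

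If instead the first condition in~\eqref{Phi_1j} fails, I would construct a strictly better tree. Pick $\phi^* \in \Phi_1$ achieving $\min_{\phi \in \Phi_1} V([n]_{[0, t_j]}^{f_0}, \phi)$, let $\bar T \in \cT_2(\phi^*)$ minimize $g(\cdot, [n]_{[0, t_j]}^{f_0})$ over $\cT_2(\phi^*)$, and form $T'$ from $T^*$ by replacing $T^*_L$ with $\bar T$; note $T' \in \cT_3(f_0, [a_0, b_0], \Phi_1, \Phi_2)$ because $t^* \in [a_0, b_0]$. Using $\cI_L \subseteq [n]_{[0, t_j]}^{f_0}$ and $\ell \ge 0$,
\[
g(\bar T, \cI_L) \;\le\; g(\bar T, [n]_{[0, t_j]}^{f_0}) \;=\; L_2([n]_{[0, t_j]}^{f_0}, \phi^*) \;\le\; V([n]_{[0, t_j]}^{f_0}, \phi^*) \;<\; g(T^*_L, \cI_L),
\]
where the strict inequality combines the failure of condition~1 with the bound on $g(T^*_L, \cI_L)$ derived above. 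Hence $g(T', [n]) < g(T^*, [n])$, contradicting optimality of $T^*$. The symmetric argument handles $\phi_2^{s,j''} \notin \Phi_{2,j}$. I expect the main obstacle to be bookkeeping: correctly locating the indices $(j', j'')$ for $T^*_L, T^*_R$ and cleanly pairing the two conditions in~\eqref{Phi_1j} (which blend the lower bound $W$ with either another $W$ or the upper bound $V$) with the two styles of contradiction---budget violation versus construction of a strictly better $T'$.
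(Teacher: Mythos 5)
Your proposal is correct and is, in substance, the paper's own argument presented in contrapositive form: you locate the root threshold in $[t_{j-1},t_j]$ and the subtree thresholds in the refined equi-spaced partitions, then use super-additivity of $L_2$ (Lemma~\ref{lemma:L2-super-add}) together with $W\le L_2\le V$ to show that failure of the second condition in \eqref{Phi_1j} violates the budget $U$ and failure of the first contradicts optimality. The only cosmetic difference is that where you make the first-condition contradiction explicit via a subtree-replacement construction, the paper instead invokes the equality $L_2(\cI_L,\phi_1^{s,j'})=\min_{\phi\in\Phi_1}L_2(\cI_L,\phi)$ that optimality of $T^*$ forces on its left subtree---the same exchange argument in disguise.
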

\begin{proof}
Let $T^*$ be any optimal solution of \eqref{opt-not-in-d=3}. 
 It suffices to prove that: if $T^* \in \cT_3 (f_0, [a_0, b_0], \Phi_1, \Phi_2)$, then there exists $j\in [s] $ such that 
\begin{equation}
    T^* \in \cT_3 (f_0, [t_{j-1}, t_j], \Phi_{1,j}, \Phi_{2,j}).
\end{equation}
Suppose $ T^* \in \cT_3 (f_0, [a_0, b_0], \Phi_1, \Phi_2)$, then $t_{\rootsub}(T^*) \in [a_0, b_0]$, and there exist integers $a_1, b_1, a_2, b_2$ such that $t_{\lsub}(T^*) \in [a_1, b_1]$, $ t_{\rsub}(T^*) \in [a_2, b_2]$, and
\begin{equation}\label{phi1-and-phi2}
    \begin{aligned}
     \phi_1 &:= ( f_{\lsub}(T^*) , [a_1, b_1], f_{\llsub}(T^*), f_{\lrsub}(T^*) ) \in \Phi_1, \\
     \phi_2 &:= ( f_{\rsub}(T^*), [a_2, b_2], f_{\rlsub}(T^*), f_{\rrsub} (T^*)) \in \Phi_2. \\
    \end{aligned}
\end{equation}

Since $[a_0,b_0] = \cup_{j=1}^s [t_{j-1}, t_j]$,  there exists $j\in [s]$ such that $t_{\rootsub}(T^*) \in [t_{j-1}, t_j] $. 
Let $(\ell_0, ...., \ell_s)$ be almost equi-spaced in $[a_1, b_1]$. $t_{\lsub}(T^*) \in [a_1,b_1] = \cup_{i=1}^s [\ell_{i-1}, \ell_{i}]$, so there exists $j'\in [s]$ such that $t_{\lsub}(T^*) \in [\ell_{j'-1}, \ell_{j'}] $. Note that (by definition)
\begin{equation}
    \phi_1^{s,j'} = ( f_{\lsub}(T^*), [\ell_{j'-1}, \ell_{j'}], f_{\llsub}(T^*), f_{\lrsub}(T^*) ),
\end{equation}
we have 
\begin{equation}\label{check-ineq1}
    \begin{aligned}
    & \qquad
     W( [n]_{[0, t_{j-1}]}^{f_0}, \phi_1^{s,j'} ) 
     ~\mathop{\le}\limits^{(i)}~ 
     L_2( [n]_{[0, t_{j-1}]}^{f_0}, \phi_1^{s,j'} ) 
     ~\mathop{\le}\limits^{(ii)}~
     L_2( [n]_{[0, t_0(T^*)]}^{f_0}, \phi_1^{s,j'} )  \\
     &
     ~\mathop{=}\limits^{(iii)}~ 
     \min_{\phi \in \Phi_1} L_2([n]_{[0, t_0(T^*)]}^{f_0},\phi) 
     ~\mathop{\le}\limits^{(iv)}~
     \min_{\phi \in \Phi_1} L_2([n]_{[0, t_j]}^{f_0},\phi) 
     ~\mathop{\le}\limits^{(v)}~ 
     \min_{\phi \in \Phi_1} V([n]_{[0, t_j]}^{f_0},\phi),
    \end{aligned}
\end{equation}
where $(i)$ is because $W(\cI , \phi) \le L_2(\cI , \phi)$ for any $\cI\subseteq [n]$ and $\phi\in \Phi$; $(ii)$ is because of $t_{\rootsub}(T^*) \in [t_{j-1}, t_j]$ and Lemma~\ref{lemma:L2-super-add}; 
$(iii) $ is because $T^*$ is the optimal solution of \eqref{eq:opt-d=3}, and the left subtree of $T^*$ (rooted at $N_L(T^*) $) is in $\cT_2(\phi_1^{s,j'})$; $(iv)$ is because of $t_{\rootsub}(T^*) \in [t_{j-1}, t_j]$ and Lemma~\ref{lemma:L2-super-add}; $(v)$ is because 
$L_2(\cI , \phi) \le V(\cI , \phi)$ for any $\cI\subseteq [n]$ and $\phi\in \Phi$. 

On the other hand, 
\begin{equation}\label{check-ineq2}
\begin{aligned}
& ~~\quad 
W([n]_{[0,t_{j-1}]}^{f_0}, \phi_1^{s,j'}) + \min_{\phi\in \Phi_2} W( [n]_{[t_j, u(f_0)]}^{f_0}, \phi ) 
\\
&\mathop{\le}\limits^{(i)} ~
W([n]_{[0,t_{j-1}]}^{f_0}, \phi_1^{s,j'}) + W([n]_{[t_j, u(f_0)]}, \phi_2) 
~\mathop{\le}\limits^{(ii)}~ 
L_2([n]_{[0,t_{j-1}]}^{f_0}, \phi_1^{s,j'}) + L_2([n]_{[t_j, u(f_0)]}^{f_0}, \phi_2) 
\\
&\mathop{\le}\limits^{(iii)}~
L_2([n]_{[0,t_0(T^*)]}^{f_0}, \phi_1^{s,j'}) + L_2([n]_{[t_0(T^*), u(f_0)]}^{f_0}, \phi_2) 
~\mathop{=}\limits^{(iv)}~ 
\min_{T\in \cT_3} \sum_{i=1}^n \ell (y_i , T(x_i)) 
~\mathop{\le}\limits^{(v)}~ U,
\end{aligned}
\end{equation}
where $(i)$ is because $\phi_2 \in \Phi_2$ (in \eqref{phi1-and-phi2}); $(ii)$ is because $W(\cI , \phi) \le L_2(\cI , \phi)$ for any $\cI\subseteq [n]$ and $\phi\in \Phi$; $(iii) $ is because of $t_0(T^*) \in [t_{j-1}, t_j]$ and Lemma~\ref{lemma:L2-super-add}; $(iv)$ is because $T^*$ is the optimal solution of \eqref{eq:opt-d=3}, the left subtree of $T^*$ (rooted at $N_L(T^*)$) is in $\cT_2(\phi_1^{s,j'})$ and the right subtree of $T^*$ (rooted at $N_R(T^*)$) is in $\cT_2(\phi_2)$; $(v)$ is because $U$ is an upper bound of the optimal value of \eqref{eq:opt-d=3}. 

 Combining \eqref{check-ineq1}, \eqref{check-ineq2} and note that $\phi_1 \in \Phi_1$ (by \eqref{phi1-and-phi2}) and $j'\in [s]$, we have 
 \begin{equation}\label{tmp-1-1}
     ( f_{\lsub}(T^*), [\ell_{j'-1}, \ell_{j'}], f_{\llsub}(T^*), f_{\lrsub}(T^*) ) = \phi_1^{s,j'} \in \Phi_{1,j}.
 \end{equation}
 Let $(r_0, ...., r_s)$ be almost equi-spaced in $[a_2, b_2]$. Since $t_{\rsub}(T^*) \in [a_2,b_2] = \cup_{i=1}^s [r_{i-1}, r_{i}]$, so there exists $j''\in [s]$ such that $t_{\rsub}(T^*) \in [r_{j''-1}, r_{j''}] $. By a similar argument as the proof of \eqref{tmp-1-1}, it can be proved that 
 \begin{equation}\label{tmp-1-2}
     ( f_{\rsub}(T^*), [r_{j''-1}, r_{j''}], f_{\rlsub}(T^*), f_{\rrsub}(T^*) )  \in \Phi_{2,j}.
 \end{equation}
 By \eqref{tmp-1-1} and \eqref{tmp-1-2}, and recall that $f_{\rootsub}(T^*) = f_0$ and $t_{\rootsub}(T^*) \in [t_{j-1}, t_j] $, we have 
 \begin{equation}
     T^* \in \cT_3 (f_0, [t_{j-1}, t_j], \Phi_{1,j}, \Phi_{2,j}).
 \end{equation}
This completes the proof of Proposition \ref{prop: pruning-d=3}. \end{proof}

Note that in the definition of $\Phi_{1,j}$, two inequalities are needed to be satisfied. 
The first inequality corresponds to a pruning procedure when only considering the left depth-2 subtree rooted at $N_L(T)$ of a tree $T$; the second inequality corresponds to a pruning procedure considering the whole depth-3 tree. A similar argument holds for the definition of $\Phi_{2,j}$. Note that this is slightly more intricate than the case for fitting depth-2 trees, where only one inequality is imposed (see Proposition \ref{prop: quantile-prune}).

\subsection{Overall strategy}
To solve \eqref{eq:opt-d=3}, \texttt{Quant-BnB} maintains and updates 
a set $\texttt{AL3}^{(k)}$ 
(short for ``alive") 
(over iterations $k=1,2,...$) that contains tuples of the form
\begin{equation*}
(f_0, [a_0,b_0], \Phi_1, \Phi_2),
\end{equation*}
where $f_0\in [p]$, $0\le a_0\le b_0\le u(f_0)$ and $\Phi_1, \Phi_2 \subseteq \Phi$. 
A tuple $(f_0, [a_0,b_0], \Phi_1, \Phi_2)$ is in $\texttt{AL3}^{(k)}$ if (based on the knowledge at iteration $k$) the optimal solution of \eqref{eq:opt-d=3} is possible to be in the set $\cT_3(f_0, [a_0,b_0], \Phi_1, \Phi_2)$. 
Initially ($k=0$), all the trees in $\cT_3$ are ``alive", so we set 
\begin{equation}\label{def:AL0-d=3}
\texttt{AL3}^{(0)} = \bigcup_{f_0=1}^p \lt\{   (f_0, [0,u(f_0)], \bar \Phi_0, \bar\Phi_0)   \rt\}, 
\end{equation}
where 
\begin{equation}
    \bar \Phi_0 = \{ (f, [0,u(f)], f_1, f_2 ) ~|~ f, f_1, f_2 \in [p] \}.
\end{equation}
\texttt{Quant-BnB} also maintains and updates the best objective value that it has found so far, denoted by $U$. Initially, we set $U $ to be the value of any feasible solution of \eqref{eq:opt-d=3}. 
At every iteration $k\ge 1$, to update $\texttt{AL3}^{(k)}$ from $\texttt{AL3}^{(k-1)}$, we first set $\texttt{AL3}^{(k)} = \emptyset $. 
The algorithm then checks all elements in $\texttt{AL3}^{(k-1)}$.
For an element $(f_0, [a_0,b_0], \Phi_1, \Phi_2)$ in $\texttt{AL3}^{(k-1)}$, if $b_0-a_0$ is less than a fixed integer $s$, then the number of trees in the space is regarded as small, and the algorithm applies an exhaustive search method to examine all candidate trees in the space $\cT_3(f_0, [a_0,b_0], \Phi_1, \Phi_2)$.
Otherwise, the algorithm conducts the following steps. 

Let $(t_0,...,t_s)$ be almost $s$-equi-spaced in $[a_0,b_0]$. Let $W$ and $V$ be two functions on $2^{[n]} \times \Phi$ satisfying $W(\cI, \phi) \le L_2(\cI, \phi) \le V(\cI, \phi)$ for all $\cI \subseteq [n]$ and $\phi\in \Phi $.

\begin{itemize}
    \item (Step 1: Update upper bound) 
    Compute
    \begin{equation}
    U'= 
    \min_{0\le j \le s, \phi_1\in \Phi_1, \phi_2 \in \Phi_2} \Big\{
        V([n]_{[0,t_j]}^{f_0}, \phi_1)
        + V( [n]_{[t_j, u(f_0)]}^{f_0}, \phi_2 )
        \Big\}. \nonumber
    \end{equation}
    If $U' < U$, update $U \leftarrow U'$, and update the corresponding best tree. 
    
    \item (Step 2: Compute lower bound and prune)  
    Compute $\Phi_{1,j}$ and $\Phi_{2,j}$ as in \eqref{Phi_1j} and \eqref{Phi_2j}, and update:
    \begin{equation}
        \texttt{AL3}^{(k)} =  \texttt{AL3}^{(k)} \bigcup \Big( \cup_{j=1}^s \{(f_0, [t_{j-1}, t_j], \Phi_{1,j}, \Phi_{2,j})\} \Big). 
    \end{equation}
\end{itemize}

Above we have discussed the overall strategy which \texttt{Quant-BnB} uses for the computation of optimal regression tree with depth 3. Additional attentions need to be paid for the 
the implementation of the algorithm and the data structure. For example, to maintain the set $\Phi_1$, it may be better to classify the elements in $\Phi_1$ into groups depending on the first two components of the elements, i.e., for $(f_1, [a_1,b_1], f_{1,1}, f_{1,2})$ and $ (f_1', [a_1',b_1'], f_{1,1}', f_{1,2}')$ in $\Phi_1$, they are in the same group if $ f_1 = f_1'$ and $[a_1, b_1] = [a_1', b_1']$. These groupings can reduce the memory usage and make the computations well-organized. A similar argument holds for $\Phi_2$. 
It is also important to make use of the tree structure and reduce the computational costs of Step 1 and Step 2.

\newpage
\section{Experiments}\label{app-sect:exp}

\subsection{Data pre-processing}\label{app-sect:preprocess}
We use a collection of 16 classification and 11 regression datasets from UCI Machine Learning Repository. Unless specified in the dataset, 80\% of data are randomly selected as the training set, and the rest are used for testing. We remove all features that do not assist prediction, i.e., unique identifiers of samples and timestamps recording when data were collected. Table \ref{tab:data} presents a summary of these datasets.

\begin{table*}[htb]
\centering
\begin{adjustbox}{width=0.98\columnwidth,center}
\begin{tabular}{l|cccc|l|cccc}  \Xhline{3\arrayrulewidth}
Name   & \textit{Task} & \begin{tabular}{@{}c@{}} number of \\ samples \end{tabular} &\begin{tabular}{@{}c@{}} number of \\ features \end{tabular} & class/dim & Name & \textit{Task}                 & \begin{tabular}{@{}c@{}} number of \\ samples \end{tabular} & \begin{tabular}{@{}c@{}} number of \\ features\end{tabular} & class/dim \\ \hline
avila&\textit{C}&10430&10&12&skin&\textit{C}&196045&3&2\\
bank&\textit{C}&1097&4&2&wilt&\textit{C}&4339&5&2\\
bean&\textit{C}&10888&16&7&carbon&\textit{R}&8576&5&3\\
bidding&\textit{C}&5056&9&2&casp&\textit{R}&36584&9&1\\
eeg&\textit{C}&11984&14&2&concrete&\textit{R}&824&8&1\\
fault&\textit{C}&1552&27&7&energy&\textit{R}&15788&28&1\\
htru&\textit{C}&14318&8&2&fish&\textit{R}&726&6&1\\
magic&\textit{C}&15216&10&2&gas&\textit{R}&29386&10&1\\
occupancy&\textit{C}&8143&5&2&grid&\textit{R}&8000&12&1\\
page&\textit{C}&4378&10&5&news&\textit{R}&31715&59&1\\
raisin&\textit{C}&720&7&2&qsar&\textit{R}&436&8&1\\
rice&\textit{C}&3048&7&2&query1&\textit{R}&8000&3&1\\
room&\textit{C}&8103&16&4&query2&\textit{R}&159874&4&3\\
segment&\textit{C}&1848&18&7&&&&\\
\Xhline{3\arrayrulewidth}
\end{tabular}
\end{adjustbox}
\caption{A summary of datasets used in experiments. \textit{C} and \textit{R} refer to classification and regression task, respectively. For classification tasks class/dim refers to the number of classes; for regression tasks, class/dim refers to the dimension of the target.}
\label{tab:data}
\end{table*}

Since most state-of-the-art algorithms can only solve datasets with binary features, we perform the same pre-processing procedure as presented in \cite{lin2020generalized}. For a feature $f\in[p]$, recall that 
$w^f_1 < w^f_2 <\dots < w^f_{u(f)}$ denotes all unique values among $\{x_{i,f}\}_{i=1}^n$. We convert feature $f$ to a set of binary features $\{f_{j}\}_{j=1}^{u(f)-1}$ defined as
\begin{equation}
    x_{i,f_j} = \left\{
    \begin{array}{ll}
        0 & \text{if } x_{i,f} < (w^f_{j} + w^f_{j+1}) / 2,\\
        1 & \text{otherwise.}
    \end{array}
    \right.
\end{equation} 
Combining $\{f_{j}\}_{j=1}^{u(f)-1}$ for each $f\in[p]$ yields a dataset with $\sum_{f\in[p]}(u(f)-1)$ binary features. The conversion is equivalent, namely, an optimal tree on the pre-processed dataset can be converted to an optimal tree on the original dataset and vice versa. In the worst case, the converted dataset has $O(np)$ binary features. It is often computationally challenging to solve the optimal decision tree on such a dataset. 

An alternative to equivalent-conversion is approximate conversion, which can greatly reduce the number of binary features. However, approximate conversion weakens the characterization capability of the tree model, which may result in non-negligible decrease in training accuracy. Hence, there is a trade-off between training accuracy and computational cost. 

We compare the equivalent-conversion conducted in our numerical experiments with an approximate binarising method adopted in \cite{JMLR:v23:20-520} (denoted by MDLP). MDLP uses a supervised discretisation algorithm based on the minimum description length principle. We take the training accuracy of CART \cite{breiman1984classification} on original datasets as a benchmark. The size of converted datasets and training accuracy are shown in Table \ref{tab:binary}. For dataset ``avila'', ``bean'',``fault'' ``room'', ``segment'' and ``skin'', the training accuracy on approximate datasets generated by MDLP is close to the optimal accuracy. However, the accuracy on original datasets outperforms that on approximate datasets by a large margin in rest cases. For several datasets, even the training accuracy of CART is comparable to the training accuracy on approximate datasets. We thus conclude that using equivalent-conversion is indispensable for obtaining high-quality trees.

\begin{table*}[htb]
\centering
\begin{adjustbox}{width=0.98\columnwidth,center}
\begin{tabular}{c|ccc|ccc|ccc} \Xhline{3\arrayrulewidth}
\multirow{2}{*}{Name} & \multirow{2}{*}{\begin{tabular}{@{}c@{}}continuous \\ feature\end{tabular} } & \multirow{2}{*}{\begin{tabular}{@{}c@{}} equivalent \\ conversion \end{tabular}} & \multirow{2}{*}{\begin{tabular}{@{}c@{}}MDLP \\ conversion \end{tabular}} & \multicolumn{3}{c|}{depth=2} & \multicolumn{3}{c}{depth=3} \\
                      &                    &                     &                     & Opt       & approx & CART         & Opt       & approx & CART       \\ \hline
avila & 10 & 22176 & 2122 & 54.2\% & 54.1\% & 50.7\% & 58.5\% & 58.5\% & 53.2\% \\ 
bank & 4 & 4078 & 26 & 92.5\% & 92.3\% & 90.9\% & 98.3\% & 97.3\% & 93.3\% \\ 
bean & 16 & 170481 & 428 & 66.3\% & 66.2\% & 65.7\% & 87.1\% & 86.9\% & 77.7\% \\ 
bidding & 9 & 10240 & 44 & 98.1\% & 98.1\% & 98.1\% & 99.3\% & 98.1\% & 98.1\% \\ 
eeg & 14 & 5239 & 118 & 66.5\% & 65.3\% & 62.5\% & 70.8\% & 68.8\% & 66.6\% \\ 
fault & 27 & 16327 & 244 & 58.3\% & 58.3\% & 54.0\% & 68.2\% & 67.3\% & 55.3\% \\ 
htru & 8 & 101412 & 92 & 97.9\% & 97.8\% & 97.7\% & 98.1\% & 97.9\% & 97.9\% \\ 
magic & 10 & 120435 & 122 & 80.5\% & 80.2\% & 79.4\% & 83.1\% & 81.1\% & 80.1\% \\ 
occupancy & 5 & 8339 & 122 & 98.9\% & 98.9\% & 98.9\% & 99.4\% & 99.1\% & 98.9\% \\ 
page & 10 & 8175 & 50 & 95.4\% & 95.1\% & 95.4\% & 97.1\% & 96.6\% & 96.4\% \\ 
raisin & 7 & 5032 & 18 & 87.4\% & 86.5\% & 86.8\% & 89.4\% & 87.5\% & 86.9\% \\ 
rice & 7 & 19982 & 28 & 93.3\% & 93.2\% & 93.0\% & 93.8\% & 93.4\% & 93.3\% \\ 
room & 16 & 2879 & 144 & 94.6\% & 94.6\% & 93.2\% & 99.2\% & 99.2\% & 96.8\% \\ 
segment & 18 & 13129 & 145 & 57.5\% & 57.4\% & 43.0\% & 88.7\% & 88.1\% & 57.4\% \\ 
skin & 3 & 765 & 108 & 92.7\% & 92.7\% & 90.7\% & 96.9\% & 96.8\% & 96.6\% \\ 
wilt & 5 & 20329 & 7 & 99.1\% & 98.7\% & 99.1\% & 99.6\% & 98.7\% & 99.3\% \\ 
\Xhline{3\arrayrulewidth}
\end{tabular}
\end{adjustbox}
\caption{Training accuracy (in percentage) of optimal classification trees with depth $2$ and $3$. The third and forth columns provide the numbers of binary features of datasets processed by equivalent-conversion and MDLP, respectively. Opt denotes the training accuracy of the optimal classification tree on original datasets, approx denotes the training accuracy of the optimal classification tree on datasets binarised using MDLP, and CART denotes the training accuracy of CART on original datasets.}
\label{tab:binary}
\end{table*}

\subsection{Optimization algorithms}

Details and experimental settings of all comparison algorithms are stated below. Unless specified, implementations of algorithms used in our experiments are obtained from their original authors.

\begin{itemize}

\item \textbf{Quant-BnB}: Our proposed algorithm is written in Julia programming language (v1.6). The parameter $s$ in Algorithm \ref{alg:bb-depth2} is set to be $3$. In Section \ref{sect:numexp2} and Section \ref{sect:numexp3}, we choose $W_{1,s'}$ defined in \eqref{def:W1} as the lower bound required in Proposition \ref{prop: quantile-prune}. The parameter $s'$ is dynamically selected as $\lfloor\frac{0.6ns}{b-a}\rfloor$ for tuple $\phi=(f_0,[a,b],f_1,f_2)$.

\item \textbf{CART} \cite{breiman1984classification}: We utilize the implementation from Python package scikit-learn.

\item \textbf{TAO} \cite{carreira2018alternating}: We implement TAO in Julia 1.6. TAO uses the solution generated by CART as a warm start.

\item \textbf{OCT} and \textbf{ORT} \cite{bertsimas2019machine}
: Since the original code is not available, we implement both methods in Python and call Gurobi9 to solve MIP models. Both methods takes the solution generated by CART as a warm start.

\item \textbf{BinOCT} \cite{verwer2019learning}: BinOCT is written in Python. We slightly modify the code so that the MIP model is solved by Gurobi9. BinOCT uses the solution generated by CART as a warm start.

\item \textbf{DL8.5} \cite{aglin2020learning}: DL8.5 is written in C++ and is run as an extension of Python.

\item \textbf{MurTree} \cite{JMLR:v23:20-520}: MurTree is written in C++ and run as an executable.

\item \textbf{FlowOCT} and \textbf{BenderOCT} \cite{aghaei2021strong}: Both methods are implemented in Python. MIP models are solved by Gurobi9.
 
\item \textbf{GOSDT} \cite{lin2020generalized}: GOSDT is written in C++ and run as an executable. GOSDT does not force hard constraints on depth, but instead applies a sparsity coefficient $\alpha$ to control the complexity. As $\alpha$ decrease, GOSDT takes longer time to solve an optimal tree. To facilitate a fair comparison with our algorithm on learning optimal depth-$2$ (or $3$) tree, we test GOSDT with
\begin{equation}
    \alpha \in \{0.1,0.05,0.02,0.01,0.005,0.002,0.001\},
\end{equation}
and select the smallest $\alpha$ with which GOSDT can learn an optimal tree with depth not greater than $2$ (or $3$). 

\end{itemize}

Other works for learning optimal trees (e.g., \cite{aghaei2019learning}) that do not show noticeable speed advantages are not mentioned above. We do not consider the comparison with these algorithms as we focus on the efficiency of solving optimal trees, 

In addition to BinOCT, MurTree and DL8.5, we also run OCT, FlowOCT, BenderOCT and GOSDT on collected classification datasets. For FlowOCT, BenderOCT and GOSDT, we convert original datasets to binary ones using equivalent-conversion described in  Section \ref{app-sect:preprocess}. None of these methods is able to learn an optimal tree on any of 16 classification datasets, so the results are not displayed.

\subsection{Results on regression tasks}
We also compare our algorithm with ORT \cite{bertsimas2019machine} on 11 regression datasets. To our best knowledge, ORT is the one of the most effective framework reported in the literature for solving optimal decision trees on regression tasks. The results are displayed in Table \ref{tab:regress}. \texttt{Quant-BnB} successfully solves trees of depth $2$ in less than 10 seconds on datasets with thousands of instances, and computes trees of depth $3$ in a reasonable time for most cases. In contrast, ORT cannot optimally solve any example in 4 hours. The experiment again confirms the advantage of \texttt{Quant-BnB} for solving shallow decision trees on relatively large-scale datasets. The scalability and versatility of our proposed method contribute to the wide applications of optimal decision trees. 
\begin{table*}[htb]
\centering
\begin{tabular}{ll|cc|cc} \Xhline{3\arrayrulewidth}
\multirow{2}{*}{Name} & \multirow{2}{*}{(n,p)} & \multicolumn{2}{c|}{depth=2}        & \multicolumn{2}{c}{depth=3}        \\
                      &                    & Quant-BnB & ORT  & Quant-BnB & ORT \\ \hline
carbon&(8576,5)&\textbf{ 0.7 }&(20\%)&\textbf{ 729 }&(423\%)\\
casp&(36584,9)&\textbf{ 4.2 }&(14\%)&\textbf{ 7609 }&(12\%)\\
concrete&(824,8)&\textbf{ \textless 0.1 }&(1.2\%)&\textbf{ 125 }&(33\%)\\
energy&(15788,28)&\textbf{ 14 }&(9.6\%)&-&(7.6\%)\\
fish&(726,6)&\textbf{ \textless 0.1 }&(4.5\%)&\textbf{ 34 }&(36\%)\\
gas&(29386,10)&\textbf{ 1.5 }&(1358\%)&\textbf{ 421 }&(510\%)\\
grid&(8000,12)&\textbf{ 1.2 }&(9.3\%)&\textbf{ 1293 }&(31\%)\\
news&(31715,59)&\textbf{ 349 }&(22\%)&-&(29\%)\\
qsar&(436,8)&\textbf{ \textless 0.1 }&(5.1\%)&\textbf{ 30 }&(32\%)\\
query1&(8000,3)&\textbf{ \textless 0.1 }&(38\%)&\textbf{ 34 }&(73\%)\\
query2&(159874,4)&\textbf{ 9.8 }&(97\%)&\textbf{ 2896 }&OoM\\
\Xhline{3\arrayrulewidth}
\end{tabular}
\caption{Comparison of \texttt{Quant-BnB} against ORT. For each dataset, the number of observations and the number of features are provided. Each entry denotes running time in seconds. - refers to time out (4h), OoM refers to out of memory (25GB).  Since ORT times out in all cases, we display the relative differences $(L_O-L_Q)/L_Q$ as a percentage instead, where $L_O$ and $L_Q$ are the training errors of ORT and \texttt{Quant-BnB}, respectively.}
\label{tab:regress}
\end{table*}

\bibliographystyle{plain}     
	\bibliography{opttree}

\end{document}